\documentclass{article}
\usepackage{arxiv}
\usepackage[american]{babel}

\usepackage{natbib} 
\bibliographystyle{plainnat}

\usepackage{mathtools} 
\usepackage{booktabs} 
\usepackage{tikz} 



\usepackage{microtype}
\usepackage{graphicx}
\usepackage{hyphenat}
\usepackage{booktabs} 
\usepackage{amsthm}
\usepackage{enumitem}
\usepackage{comment}
\usepackage{amssymb}
\usepackage{color, colortbl}
\usepackage{hyperref}
\usepackage{makecell}
\usepackage{boldline}
\setcellgapes{3pt}
\usepackage{url,wrapfig}
\usepackage{csquotes}
\usepackage{subcaption}
\usepackage{capt-of}
\usepackage{tabularx,xspace}
\usepackage{newfloat}
\usepackage{array, multirow, bigdelim, makecell, booktabs}
\usepackage{ragged2e}
\usepackage[vlined,ruled,linesnumbered,algo2e]{algorithm2e}
\hypersetup{hidelinks}
\definecolor{lppsgd}{RGB}{186, 52, 235}
\definecolor{lapsgd}{RGB}{194, 230, 53}

\DeclareFloatingEnvironment[fileext=frm,placement={!ht},name=Algorithm]{myfloat}
\usepackage[nodisplayskipstretch]{setspace}


\usepackage{amsmath,amsfonts,bm}



\newcommand{\ignore}[1]{{\iffalse #1\fi}}







\def\eqref#1{equation~\ref{#1}}
\def\Eqref#1{Equation~\ref{#1}}








\def\1{\bm{1}}










\DeclareMathAlphabet{\mathsfit}{\encodingdefault}{\sfdefault}{m}{sl}
\SetMathAlphabet{\mathsfit}{bold}{\encodingdefault}{\sfdefault}{bx}{n}


\def\gS{{\mathcal{S}}}


\def\sE{{\mathbb{E}}}










\mathchardef\mhyphen="2D

\newcounter{mynumber}

\definecolor{LightCyan}{rgb}{0.88,1,1}
\definecolor{Ao}{rgb}{0.0, 0.5, 0.0}
\definecolor{cadmiumorange}{rgb}{0.93, 0.53, 0.18}
\definecolor{cardinal}{rgb}{0.77, 0.12, 0.23}
\definecolor{byzantium}{rgb}{0.44, 0.16, 0.39}
\definecolor{gamboge}{rgb}{0.89, 0.61, 0.06}
\definecolor{goldenbrown}{rgb}{0.6, 0.4, 0.08}
\newcommand{\sone}{\textbf{S1}\xspace}
\newcommand{\stwo}{\textbf{S2}\xspace}
\newcommand{\sthr}{\textbf{S3}\xspace}
\newcommand{\sfor}{\textbf{S4}\xspace}
\newcommand{\sgd}{\textsc{Sgd}\xspace}
\newcommand{\lsgd}{\textsc{L-sgd}\xspace}
\newcommand{\alsgd}{\textsc{Lap-sgd}\xspace}
\newcommand{\plsgd}{\textsc{Pl-sgd}\xspace}
\newcommand{\palsgd}{\textsc{Lpp-sgd}\xspace}
\newcommand{\mbsgd}{\textsc{Mb-sgd}\xspace}

\newcommand{\pasmp}{\textsc{PASSM$+$}\xspace}
\newcommand{\res}{\textsc{ResNet}\xspace}
\newcommand{\wres}{\textsc{WideResNet}\xspace}
\newcommand{\bw}{\textsc{Buckwild!}\xspace}
\newcommand{\hw}{\textsc{Hogwild!}\xspace}
\newcommand{\hwp}{\textsc{Hogwild!++}\xspace}
\newcommand{\sqn}{\textsc{SqueezeNet}\xspace}
\newcommand{\rnt}{\textsc{ResNet-20}\xspace}
\newcommand{\cft}{\textsc{Cifar-10}\xspace}
\newcommand{\cfh}{\textsc{Cifar-100}\xspace}
\newcommand{\imn}{\textsc{Imagenet}\xspace}
\newcommand{\rai}{\texttt{read\&inc}\xspace}
\newcommand{\rdo}{\texttt{read}\xspace}

\newcolumntype{P}[1]{>{\RaggedLeft\hspace{0pt}}p{#1}}

\newcommand{\block}[2]{[#1,#2]\xspace}




\newcommand{\tn}{\stackrel{\sim}{\smash{{\nabla}}\rule{0pt}{1.1ex}}\kern-1ex}

\newcommand{\fadd}{\texttt{fetch\&add}\xspace}

\makeatletter
\newcommand{\nosemic}{\renewcommand{\@endalgocfline}{\relax}}
\newcommand{\dosemic}{\renewcommand{\@endalgocfline}{\algocf@endline}}%
\let\oldnl\nl
\newcommand{\nonl}{\renewcommand{\nl}{\let\nl\oldnl}}

\SetCommentSty{mycommfont}

\newtheorem{theorem}{Theorem}[section]

\newtheorem{lemma}{Lemma}[section]

\newtheorem{assumption}{Assumption}[section]

\makeatother

\title{Scaling the Wild: Decentralizing Hogwild!-style Shared-memory SGD}         
\date{}
\author{
	Bapi Chatterjee\\
	Department of Computer Science \& Engineering\\
	IIIT-Delhi, New Delhi, India\\
	\texttt{bapi@iiitd.ac.in}\\
	\And
	Vyacheslav Kungurtsev\\
	Department of Computer Science\\
	Czech Technical University in Prague\\
	\texttt{vyacheslav.kungurtsev@fel.cvut.cz } \\
	 \AND
	 Dan Alistarh\\
	 Institute of Science and Technology\\
	 Austria\\
	 \texttt{dan.alistarh@ist.ac.at} \\	
}

\begin{document}
\maketitle
	\begin{abstract}
	Powered by the simplicity of lock-free asynchrony, Hogwilld! is a go-to approach to parallelize SGD over a shared-memory setting. Despite its popularity and  concomitant extensions, such as PASSM+ wherein concurrent processes update a shared model with partitioned gradients, scaling it to decentralized workers has surprisingly been relatively unexplored. To our knowledge, there is no convergence theory of such methods, nor systematic numerical comparisons evaluating speed-up.

In this paper, we propose an algorithm incorporating decentralized distributed memory computing architecture with each node running multiprocessing parallel shared-memory SGD itself. Our scheme is based on the following algorithmic tools and features: (a) asynchronous local gradient updates on the shared-memory of workers, (b) partial backpropagation, and (c) non-blocking in-place averaging of the local models. We prove that our method guarantees ergodic convergence rates for non-convex objectives. On the practical side, we show that the proposed method exhibits improved throughput and competitive accuracy for standard image classification benchmarks on the CIFAR-10, CIFAR-100, and Imagenet datasets. Our code is available at \url{https://github.com/bapi/LPP-SGD}.
\end{abstract}

\section{Introduction}

\label{sec:intro}
Stochastic gradient descent (\sgd) is described as
 \begin{equation}\label{sgd}x_{k+1}=x_{k}-\alpha_k\tn f(x_k),
 \end{equation}
 where after $k$ iterations, $x_{k}$ is the estimate of the solution to the optimization problem: $\min_{x \in \mathbb{R}^d} f(x)$, $\alpha_k$ is step-size, and $\tn f(x_k)$ is a random variable such that $\sE[\tn f(x_k)]=\nabla f(x_k)$.  
 In machine learning applications, empirical loss  $f(x):\mathbb{R}^d\rightarrow\mathbb{R}$ of a model $x \in \mathbb{R}^d$ over a training set $\gS$ is minimized using \sgd. Commonly, we sample a (mini) batch $B_k\subseteq \mathcal{S}$ i.i.d. to compute $\tn f(x_k)=\nabla f(x_k, B_k)$. 
 
 As the size of training set $\gS$ grows, adapting \sgd to distributed -- centralized shared-memory-based and decentralized message-passing-based  -- systems becomes imperative. 
 
 In a shared-memory setting, \hw \citep{recht2011hogwild} is now a celebrated approach, wherein multiple concurrent processes (or threads) asynchronously compute $\tn f(x_k)$ to update a shared model $x \in \mathbb{R}^d$ without any lock. Subsequently, \hw saw developments such as \bw \citep{Sa2015TamingTW}, which uses lower precision arithmetic. Recently, \cite{kungurtsev2019asynchronous} presented \pasmp, which showed that on the shared-memory of a GPU, partial backpropagation with respect to (w.r.t.) blocks of a convolutional neural network (CNN) by concurrent processes can provably reduce the total training cost while offering a surprisingly competitive optimization result. However, none of these methods can scale beyond a single CPU or GPU.

In a decentralized setting, arguably, the most popular distributed SGD variant  is \textit{minibatch data-parallel SGD} (\mbsgd) \citep{bottou2012stochastic, DeanCMCDLMRSTYN12}, described as
 \begin{equation}\label{distsgd}x_{k+1}=x_{k}-\alpha_{k}\frac{1}{|\mathcal{Q}|}\sum_{q\in\mathcal{Q}} \nabla f(x_k, B^q_k).
 \end{equation}
 Essentially, $\gS$ is distributed over a set of workers/GPUs $\mathcal{Q}$, thereby $B^q_k\subseteq \mathcal{S}$ is sampled locally by a worker $q\in \mathcal{Q}$ at iteration $k$, thus $\tn f(x_k)=\frac{1}{|\mathcal{Q}|}\sum_{q\in\mathcal{Q}} \nabla f(x_k, B^q_k)$. There has been a significant interest in \mbsgd-based deep neural network training with increasingly large number of GPUs, e.g.~\cite{Goyal2017AccurateLM}, \cite{Akiba2017ExtremelyLM}, \cite{You2017ImageNetTI}, \cite{Mikami2018ImageNetResNet50TI}, etc.

Though a simple scheme to scale with the number of GPUs, \mbsgd consistently communicates gradients after each computation step, thus incurring a high communication and synchronization cost. An alternative approach is \textit{parallel or local SGD} (\lsgd)  \citep{Zinkevich2010ParallelizedSG,Zhang2016ParallelSW}, wherein workers run SGD without any communication for several local steps, after which they globally \emph{average} the resulting local models. 
Clearly, \lsgd reduces the communication cost by reducing its frequency. Other techniques such as qunatization  \citep{Alistarh2017QSGDCS} and sparsification \citep{stich2018sparsified}, applied to  message-passing distributed \sgd, primarily address the high cost of communication. Recently, these methods also have been combined with \lsgd such as \cite{0001DKD19,KoloskovaLBJS20}. Nonetheless, beyond using larger mini-batches, none of them can scale the training locally on GPUs though the contemporary GPUs efficiently host multiprocessing and thus \hw \citep{mpsnvidia,kungurtsev2019asynchronous}.   

A major motivation for decentralizing \hw-style approaches in a multi-GPU setting is that models trained by large batch regimes generally have poor quality. It  was investigated by \cite{keskar2019large}, who described this behaviour in terms of sharpness of minima correlating the minibatch-size, which degrades generalization. Our experiments also highlight this fact, see Table \ref{tab:sgd}. Leveraging multiple processes locally on each GPU can keep the batch-size small.

To our knowledge, only attempt to decentralize \hw was made by \hwp \citep{Zhang2016HogWildAN} -- run \hw locally on each CPU of a non-uniform-memory-access (NUMA) machine and synchronize by averaging on a ring-graph topology. Taking advantage of cache-locality, \hwp exhibited improved throughput over \hw for training a number of convex models. However, \hwp does not have a convergence theory and no further development to this approach is known.

In this paper, we describe the \textit{locally-partitioned-asynchronous parallel SGD} (\palsgd), a new method that employs the following techniques:
\begin{enumerate}[leftmargin=*,nosep,topsep=0pt,nolistsep,noitemsep]
\item The model on each worker is shared by multiple processes that perform lock-free partitioned gradient updates computed by partial backpropagation, thereby reduces the total gradient computation cost.
\item Dedicated processes on each worker concurrently perform the model averaging, which is in-place, non-blocking, and overlaps the gradient computations -- it significantly reduces the cost of synchronization.
\end{enumerate}
\begin{figure}[t]
	\setlength\tabcolsep{3pt} 
	\centering\small
	\begin{tabularx}{\columnwidth}{|X|X|X|X|X|X|X|X|X|X|}
		\toprule
		Method & $U$ & $B_{loc}$ & Train Loss & Train Acc. & Test  Loss & Test  Acc. & Time (Sec)& Speed-up\\
		\midrule
		\mbsgd& 1 & 128 & 0.016 &  99.75 & 0.234 & 92.95& 1754& 1.0x \\
		\mbsgd& 1 &  1024 & 0.023 &  99.51 & 0.293 & \cellcolor{orange!50}91.38& 1201& \cellcolor{green}1.46x\\
		\plsgd& 1 &   128 &0.018 & 99.69 & 0.245 & 92.98&1603& 1.09x\\
		\plsgd&  1 &  1024 &0.154 & 94.69 & 0.381 & \cellcolor{red!50}87.81&1159&\cellcolor{green}1.51x\\ \alsgd & 4& 128 & 0.012 &  99.83 & 0.270 & \cellcolor{green}92.92& 1304& \cellcolor{green}1.35x \\
		\alsgd & 6& 128 & 0.023 &  99.51 & 0.266 & \cellcolor{green}92.91& 1281& \cellcolor{green}1.37x \\
		\alsgd & 4&  256 &0.010 & 99.90 & 0.280 & \cellcolor{green}92.93&1219&  \cellcolor{green}1.44x \\
		\palsgd &4& 128 & 0.019 &  99.62 & 0.270 & \cellcolor{green}92.98& 1153& \cellcolor{green}1.52x \\
		\palsgd &6& 128 & 0.021 &  99.58 & 0.262 & \cellcolor{green}92.84& 1085& \cellcolor{green}1.62x \\
		\palsgd &4&  256 &0.019 & 99.65 & 0.267 & \cellcolor{green}92.95&1047& \cellcolor{green}1.68x \\
		\bottomrule
	\end{tabularx}\captionof{table}{\small \rnt/\cft training for 300 epochs on 2 Nvidia GeForce RTX 2080 Ti GPUs. $U$ indicates the number of concurrent gradient updater processes on each GPU. Throughout the training the local batch size is kept identical across workers. In each case, the LR is warmed up for the first 5 epochs to scale from $\alpha_{0}$ to $\alpha_{0}\times\frac{B_{loc}{\times}Q}{B_{loc{\mhyphen}base}}$, where $B_{loc{\mhyphen}base}=128$, the number of workers $Q=2$ and $\alpha_{0}=0.1$. In \plsgd, \alsgd, and \palsgd, we average the model after each gradient update for the first 150 epochs and thereafter averaging frequency is set to 16 as in \cite{Lin2020DontUL}; other hyperparameters are identical to theirs. The listed results are average of 3 runs with different seeds.}\label{tab:sgd}
\end{figure}

In the case of full (non-partitioned) gradient update by each process to the model throughout the training, we call it \textit{locally-asynchronous parallel SGD} (\alsgd). Essentially, \alsgd and \palsgd decentralize \hw and \pasmp, respectively. 

A key contribution of the presented work is the convergence theory -- under the standard assumptions, we prove that \alsgd and \palsgd  ergodically converge with standard sublinear rate for non-convex objectives. 

Experimental results in Table \ref{tab:sgd} give a glimpse of the efficacy of our algorithms. In the experiemnts, we have included \textit{post local SGD} (\plsgd), which is a modified version of \lsgd by \cite{Lin2020DontUL}.  \plsgd synchronizes the distributed models more frequently during the initial phase of training to address the issue of poor generalization of \lsgd.  

Our focus in this work is on showcasing the strength of combination of shared-memory and message-passing asynchrony for distributed SGD. Our method can be easily extended to incorporate techniques such as lower precision arithmetic, quantized and sparsified communication, etc. 

\section{System and Optimization Model}

\label{sec:model}
\begin{figure}[h]
	\setlength\tabcolsep{4pt} 
	\centering
	\begin{tabularx}{\columnwidth}{|p{3cm}|X|}
		\toprule
		Notation & Description\\
		\midrule 
		$\mathcal{Q}$, $Q$ & Set of workers (GPUs), Number of workers.\\
		$x^q$ & Local shared model on worker $q\in\mathcal{Q}$.\\		
		$u^q$ & A gradient updater process on worker $q\in\mathcal{Q}$.\\
		$U^q$, $U$ & Set of updater processes on $q$, Number thereof.\\
		$a^q$, $P^q$ & Averaging process on worker $q$, $\{\{a^q\}\cup U^{q}\}$.\\
		$x^q_i$ & $i^{th}$ block of the model (vector) $x^q$, where $i\in\mathcal{I}$.\\
		$\tn_{i}f$ & Stochastic gradient of $f$ w.r.t. $i^{th}$ block. \\
		$x^q[e]$ & $e^{th}$ index of the local model (vector) $x^q$.\\
		$x^q\block{a}{b}$ & Partition of $x^q$ from its $a^{th}$ to $b^{th}$ index.\\
		$\mathcal{C}^q$ & Shared iteration counter on worker $q\in\mathcal{Q}$.\\
		\bottomrule
	\end{tabularx}\captionof{table}{ Frequently used notations.}\label{tab:notation}
\end{figure}
We consider a $d$\hyp{}dimensional differentiable empirical risk function $f:\mathbb{R}^d\rightarrow\mathbb{R}$ and a finite training dataset $\mathcal{S}$ as described before. We aim to iteratively converge at a state $x^*$ of the model $x\in\mathbb{R}^d$ such that $x^*\in \arg\min_{x\in\mathbb{R}^d}f(x)$.

We consider a set of workers $\mathcal{Q}$ distributed over a complete graph structured network. We denote the number of workers as $Q=|\mathcal{Q}|$ and assume that $Q\ge2$. 
A worker $q\in\mathcal{Q}$ maintains a local estimate $x^q$ of the model $x$. At initialization, $x^q=x^{q'}~\forall q,q'\in \mathcal{Q}$.  We form, possibly overlapping, blocks of $x^q$ as  $x^q:=\cup_{i\in\mathcal{I}}x^q_i$, where $\mathcal{I}$ is a finite index set.

A worker $q\in\mathcal{Q}$ spawns a set of concurrent \textit{updater processes} $U^{q}=\{u^q\}$ and a designated \textit{averaging process} $a^q$. For simplicity, $|U^{q}|=|U^{q'}|=U~\forall~q,q'\in\mathcal{Q}$. On each $q\in\mathcal{Q}$, $x^q$ allows shared read/write access to the set of concurrent processes $P^{q}=\{\{a^q\}\cup U^{q}\}$. We assume that each of $u^q\in U^{q}$ on each of $q\in\mathcal{Q}$ have access to the entire training dataset $\mathcal{S}$. Thereby, a $u^q\in U^{q}$ can 
compute a minibatch stochastic gradient with respect to a block of parameters by sampling a subset $B^q\subseteq\mathcal{S}$ typically selected i.i.d. by a distribution $\mathcal{D}$ as $\tn_if(x^q):=\nabla_{x^q_i}f(x^q,B^q)$. Thus, 
\begin{equation}\label{eqn:unbiased} \begin{array}{l}\mathbb{E}_\mathcal{D}[\tn_if(x)]=\nabla_{x^q_i}f(x),\end{array} 
\end{equation}
The processes $p^q\in P^{q}$ maintain a \textit{shared iteration counter} $\mathcal{C}^{q}$, initialized to zero. The operation \rai atomically increments $\mathcal{C}^{q}$ and returns its value that was read before incrementing. The operation \rdo provides an atomic read. $\mathcal{C}^{q}$ essentially enables ordering the shared gradient updates, which determines the averaging rounds $j$. 
Finally, we have the following standard assumptions on $f$:

\text{\textbf{Lower bound.} } \begin{equation}\label{eqn:Lower bound}f(x)\ge f_m  ~\forall~x\in\mathbb{R}^d.
\end{equation}
\text{\textbf{$L$-Lipschitz Smoothness.} } \begin{equation}\label{eqn:smoothness} \|\nabla f(x)-\nabla f(y)\|\le L\|x-y\|  ~\forall~x,y\in\mathbb{R}^d. 
\end{equation}
\text{\textbf{Bounded second moment. }}
\begin{equation}\label{eqn:secondmoment} \mathbb{E}_\mathcal{D}[\|\tn_if(x)\|^2]
		\le M^2 ~\forall~x\in\mathbb{R}^d.
\end{equation}
\text{\textbf{Bounded variance. }}
\begin{equation}\label{eqn:variance} 
		\mathbb{E}_\mathcal{D}[\|\tn_if(x)-\nabla_if(x)\|^2]
		\le \sigma^2 ~\forall~x\in\mathbb{R}^d.
\end{equation}
The bounded second moment assumption (\ref{eqn:secondmoment}) is needed for \hw-style updates, see \cite{alistarh2020elastic}.

\section{Algorithm}

\label{sec:algo}
\begin{algorithm2e}[htbp]
	\nonl\textbf{Given: model $x\in\mathbb{R}^d$, training set $\mathcal{S}$, and sample processing budget $T$ minibatches of size $B_{loc}$.}\\
	\SetInd{0.5em}{0.5em}
	\nonl\textbf{\normalsize Steps of an updater process $u^q$:}\\
	Initialize $s=0$\tcp*[l]{Process-Local counter.}
	\While{$s\le T$}{
		Read and increment the shared counter $\mathcal{C}^q$: $s:=\rai(\mathcal{C}^q)$\;\label{sincread}
		Lock-free collect a  snapshot $v^{q,s}$ of $x^q$: $v^{q,s}[e]:=\rdo(x^{q}[e])$, $\forall~1\le e \le d$\;\label{line:updatesnap}
		Select a block $v^{q,s}_i$ following the \textbf{Block Selection Rule} and sample a minibatch $B^q_s\subset\mathcal{S}$\; 
		Use $B^q_s$ to compute a partial minibatch stochastic gradient w.r.t. the block $v^{q,s}_i$:  $\tn_{i}f(v^{q,s})$\;
		Lock-free update the model block $x^{q}_i$: $x^{q}_i[e]~ \mathrel{{-}{=}} ~\alpha_{s}\tn_{i}f(v^{q,s})[e]$, $\forall~1\le e \le length(v^{q,s}_i)$\;\label{line:modelupdate}
	}
\hrule
	\nonl{\normalsize \textbf{Steps of the averaging process $a^q$:}}\\
		Initialize $s_{cur}=s_{pre}=0$\tcp*[l]{Iteration counters to track updates in the model based on the shared counter $\mathcal{C}^q$ and decide whether to synchronize.}
		Initialize $j=0$\tcp*[l]{Averaging round counter.}
	\While{$s_{cur}\le T$}{
		Read the current value of the shared counter:  $s_{cur}:=\texttt{read}(\mathcal{C}^q)$\;\label{scurread}
		Compute $K_j$ corresponding to $s_{cur}$ following the \textbf{Synchronization Frequency Scheme}\;
		\If{$s_{cur}-s_{pre}\ge K_j$}{
			Update counters: $s_{pre}:= s_{cur}$; $j:=j+1$\;
			Collect a lock-free snapshot $v^{q,j}$  of $x^q$: $v^{q,j}[e]:=\rdo(x^{q}[e])$, $\forall~1\le e \le d$ \;\label{line:averagesnap}
		Synchronize across the peers in $\mathcal{Q}$ to compute all-reduced-average	$\overline{v}_j:=\frac{1}{Q}\sum_{q\in\mathcal{Q}}v^{q,j}$\;
			Compute $\Delta v^{q,j}=\overline{v}_j-v^{q,j}$\;
			In-place lock-free update: $x^{q}[e]~ \mathrel{{+}{=}} ~\Delta v^q[e]$, $\forall~1\le e \le d$\;\label{line:averageupdate}
		}
	}
	\caption{\small Steps of \palsgd on a $q\in\mathcal{Q}$.}\label{alg:algo}
\end{algorithm2e} 
We describe \palsgd in Algorithm \ref{alg:algo}, of which \alsgd is a special case. For simplicity, the steps taken by individual processes are given as sequential pseudocode, however, by implementation, each $a^q, u^q\in P^q$ work concurrently over the same address space associated with a worker $q\in\mathcal{Q}$, and all $q\in\mathcal{Q}$ run in parallel. 

\paragraph{Block partitions.} Considering a model  implemented as an ordered list of tensors, a block is essentially a sub-list of successively ordered tensors. More formally, consider a local model estimate $x^q\in \mathbb{R}^d$ and a set of positive integers $\{b_0,b_1,\ldots,b_U\}$ s.t.
$b_0=0,~b_U=d\text{ and }\sum_{i=1}^{U}(b_i-b_{i-1})=d$. We form the blocks $\{x^q_i\}_{i\in\mathcal{I}}$ ranked by the index set $\mathcal{I}=\{0,1,\ldots,U\}$ s.t. $x^q:=\cup_{i\in\mathcal{I}}x^q_i$ as the following:
\begin{align}\label{eqn:partition}
	x^q_i&:=\left\{ \begin{array}{lr}x^q\block{1}{d},&~\text{if}~i=0,\\
	x^q\block{b_{i-1}+1}{b_i},& ~\text{if}~i>0.\end{array}\right.
\end{align} 
Essentially, block $x^q_0$ represents the \textit{entire model vector} $x^q$ whereas other blocks form \textit{disjoint partitions} of $x^q$. 

For a CNN model, $x^q$ represents list of its layers (tensors). We partition it into blocks $\{x^q_1,\ldots,x^q_U\}$ of lengths $\{b_i-b_{i-1}\}_{i=1}^U$. The choice of the integers $b_i$s are such that the aggregated cost of concurrent partial backpropagation starting from the output layer to these blocks could be minimized \citep{kungurtsev2019asynchronous}.
\paragraph{Block Selection Rule.} In \alsgd, each $u^q\in U^{q}$ selects $x^q_0$, i.e. full model, throughout the training. In \palsgd as well, during the initial phase of training, say, while $s\le T_{st}$ for $s$ read at line \ref{sincread} in Algorithm \ref{alg:algo}, each $u^q$ selects the full model $x^q_0$. After that, in alternate iterations, select the full model $x^q_0$ and the block $x^q_{rank(u^q)}$: $rank(u^q)\in\{1,\ldots,U\}$. 
 
\paragraph{Discussion on the Rule.} As described in \cite{kungurtsev2019asynchronous}, we have total potential savings of $F\frac{(c-1)}{2}$ floating point operations (flops) in $c$ concurrent partial gradient computation steps by $c$ processes, where $F$ flops is the cost of full gradient computation. \cite{kungurtsev2019asynchronous} also showed that partial gradient updates throughout the training causes poor optimization results. We observed that at line 3 Algorithm \ref{alg:algo} by selecting $s\le T/10$ roughly, it consistently resulted in the desired optimization quality. Thus, for about $9T/20$ iterations we perform partial gradient computation. 
\paragraph{Synchronization Frequency Scheme.} Our approach to fix the synchronization frequency $K_j$ is similar to that of post local SGD \citep{Lin2020DontUL}: $K_j=1$ until $T/2$ minibatches are processed, i.e. until $s_{cur}$ reads up to $T/2$ at line \ref{scurread} in Algorithm \ref{alg:algo}, where $T$ is the total number of minibatches to be processed; thereafter we increase $K_j$ to $H>1$. Typically, we choose $H=8, ~16$, etc.

\section{Experiments}

\label{sec:experiments}
In this section, we present experimental evaluation of \alsgd and \palsgd against \mbsgd and \plsgd schemes. 

\textbf{Distributed training.} We use CNN models \rnt \citep{He2016DeepRL}, \sqn \citep{Iandola2017SqueezeNetAA}, and \wres-16x8 \citep{Zagoruyko2016WideRN} for the 10-/100-class image classification tasks on datasets \cft/\cfh \citep{Krizhevsky2009LearningML}. We also train \res-50 for a 1000-class classification problem on the \imn \citep{Russakovsky2015ImageNetLS} dataset. We keep the sample processing budget identical across the methods i.e. the training procedure terminates on performing an equal number of epochs (full pass) on the training set. 
 Our data sampling is i.i.d. -- we partition the sample indices among the workers that can access the entire training set; the partition indices are reshuffled every epoch following a seeded random permutation based on epoch-order.  

\textbf{Platform specification.} We use the following specific settings: (a) \textbf{S1:} two Nvidia GeForce RTX 2080 Ti GPUs, (b) \textbf{S2:} four Nvidia GeForce RTX 2080 Ti GPUs, (c) \textbf{S3:} two \textbf{S2} settings connected with a 100 GB/s infiniband link, and (d) \textbf{S4:} eight Nvidia V100 GPUs on a cluster. Other system-related details are given in the Appendix \ref{append:exp}. 
%

\textbf{Implementation framework.} We used open-source Pytorch 1.5 \cite{paszke2017automatic} library. For cross-GPU/cross-machine communication we use NCCL \citep{ncclnvidia} primitives provided by Pytorch. \mbsgd is based on \texttt{DistributedDataParallel} Pytorch module. The \plsgd implementation is derived from the author's code \cite{localsgd} and adapted to our setting. 
The \texttt{autograd} package of Pytorch allows specifying the leaf tensors of the computation graph of the CNN loss function w.r.t. which gradients are needed. We used this functionality in implementing partial gradient computation in \palsgd.

\textbf{Locally-asynchronous Implementation.} \ignore{We use process-level concurrency of Python multiprocessing package for asynchronous updates and averaging. This is out of the purview of the thread-level global interpreter lock (GIL) \cite{gil}, thus non-blocking. To specify,} 
We use a process on each GPU, working as a parent, to initialize the CNN model and share it among the spawned child-processes. The child-processes work as updaters $u^q\in U^q$ on a GPU $q \in \mathcal{Q}$ to compute the gradients and update the model. Concurrently, the parent-process, instead of remaining idle, as it happens commonly with such designs, acts as the averaging process $a^q$, thereby productively utilizing the entire address space occupied over the GPUs. The parent- and child-processes share the iteration and epoch counters.

\textbf{Hyperparameters (HPs).} Each of the methods use identical momentum and weight-decay for a given CNN/dataset case; we rely on their previously used values \citep{Lin2020DontUL}. The LR schedule for \mbsgd and \plsgd are identical to \cite{Lin2020DontUL}. For the \palsgd and \alsgd, we used cosine annealing schedule without any intermediate restart \citep{Loshchilov2017SGDRSG}. Following the well-accepted practice, we \textit{warm up} the LR for the first 5 epochs starting from the baseline value used over a single worker training. In some cases, a grid search \citep{Pontes2016DesignOE} suggested that for \palsgd warming up the LR up to 1.25$\times$ of  the \textit{warmed-up} LR of \alsgd for the given case, improves the results.  Additional details are in Appendix.

\textbf{\large Experimental Results:}\\*
Result tables use these abbreviations: Tr.L.: Training Loss, Tr.A.: Training Accuracy, Te.L.: Test Loss, Te.A.: Test Accuracy, and $T$: Time in Seconds. The asynchronous methods have inherent randomization due to process scheduling by the operation system. Irrespective of that, in each micro-benchmark we take the mean of 3 runs with different random seeds unless otherwise mentioned. 

\begin{figure}[h]
	\begin{minipage}[b]{0.5\columnwidth}
		\setlength\tabcolsep{3pt} 
		\centering\small
		\begin{tabularx}{0.5\columnwidth}{X|X|X|X|X}
			\toprule
			$U$ & $B_{loc}$ & \textbf{Tr.A.} & \textbf{Te.A.} & $T$\\
			\midrule
			4& 128 & 99.62 & 92.81& 1304\\
			6& 128 & 99.85 & 92.92& 1266 \\
			8& 128 & 99.62 & 92.64& 1259\\
			4&  256 & 99.72 & 92.83&1219\\
			6&  256 & 99.90 & 92.24&1166\\
			\bottomrule
		\end{tabularx}\captionof{table}{\small \alsgd.}\label{tab:concurrency}
	\end{minipage}	
	\hfill%
	\begin{minipage}[b]{0.5\columnwidth}
		\setlength\tabcolsep{3pt} 
		\centering\small
		\begin{tabularx}{0.5\columnwidth}{X|X|X|X|X}
			\toprule
			$K$ & $\mathbf{P}$ & \textbf{Tr.A.} & \textbf{Te.A.} & $T$\\
			\midrule
			1& 150 & 99.76 & 93.01& 1266\\
			4& 150 & 99.85 & 92.78& 1263 \\
			8& 150 & 99.62 & 93.08& 1262\\
			8&  0 & 99.78 & 92.83&1263\\
			16&  150 & 99.90 & 93.16&1261\\
			\bottomrule
		\end{tabularx}\captionof{table}{\small \alsgd.}\label{tab:avg}
	\end{minipage}	
\end{figure}

\textbf{Concurrency on GPUs.} GPU on-device memory is a constraint. However, once the data-parallel resources get saturated, allocating more processes degrades the optimization quality vis-a-vis performance. For instance, as shown in Table \ref{tab:concurrency} -- the average performance for different $U$ and $B_{loc}$ for training \rnt/\cft by \alsgd on setting \sone -- there is apparently no advantage beyond 6 and 8 concurrent processes with batch-sizes 128 and 256, respectively. 
\begin{figure}[h]
	\setlength\tabcolsep{3pt} 
	\centering\small
	\begin{tabularx}{\columnwidth}{p{12.8mm}|X|p{2.8mm}|X|X|X|X|X}
		\toprule
		\textbf{Method} & $B_{loc}$ & $U$ & \textbf{Tr.L.} & \textbf{Te.L.} & \textbf{Tr.A.} & \textbf{Te.A.} & \textbf{$T$}\\
		\midrule
		\alsgd &       64 &            4 &     0.007 &    1.341 &      {99.98} &     {75.97} &     {1564} \\ 
		\palsgd &       64 &           3 &     0.008 &    1.554 &      \textbf{99.97} &     \textbf{75.42} &     \textbf{1376} \\ 
		\mbsgd &      128 &           - &     0.007 &    1.348 &      99.97 &     73.29 &     2115 \\ 
		\mbsgd &      512 &            - &     0.034 &    1.918 &      99.64 &     63.74 &     2076\\
		\plsgd &      128 &           - &     0.020 &    1.354 &     99.81 &     72.38 &      2108\\
		\plsgd &      512 &             - &     0.393 &    1.758 &      88.91 &     60.47 &     1976\\
		\bottomrule
	\end{tabularx}\captionof{table}{\small \wres-16x8 on \cfh over the setting \textbf{S3}.}\label{tab:Arch_wnet168_Data_cifar100}
\end{figure}

\begin{figure}[h]
	\setlength\tabcolsep{3pt} 
	\centering\small
	\begin{tabularx}{\columnwidth}{p{12.8mm}|X|p{2.8mm}|X|X|X|X|X}
		\toprule	\textbf{Method} & $B_{loc}$ & $U$ & \textbf{Tr.L.} & \textbf{Te.L.} & \textbf{Tr.A.} & \textbf{Te.A.} & \textbf{$T$}\\
		\midrule
		\alsgd &      128 &              4 &     0.004 &    0.385 &      {99.95} &     {92.48} &     {1251} \\ 
		\palsgd &      256 &               3 &     0.007 &    0.355 &      \textbf{99.89} &     \textbf{92.55} &     \textbf{1027} \\ 
		\mbsgd &     1024 &       - &     0.003 &    0.325 &      99.97 &     91.50 &     1580 \\ 
		\mbsgd &      128 &               - &     0.005 &    0.294 &      99.93 &     92.41 &     2243\\
		\plsgd &     1024 &              - &     0.005 &    0.329 &      99.93 &     91.99 &     1202\\
		\plsgd &      128 &              - &     0.006 &    0.283 &      99.91 &     92.51 &     1843\\
		\bottomrule
	\end{tabularx}\captionof{table}{\small \sqn on \cft over the setting \textbf{S1}.}\label{tab:Arch_squuezenet_Data_cifar10}
\end{figure}

\begin{figure*}[h]
	\begin{minipage}{\textwidth}
		\begin{minipage}[b]{\textwidth}
			\centering
			\includegraphics[width=\linewidth]{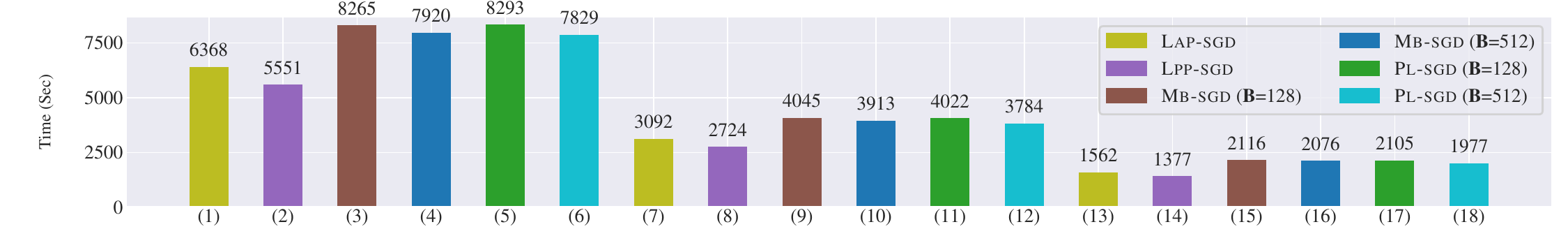}
		\end{minipage}
		\begin{minipage}[b]{\textwidth}
			\centering\small
			\setlength\tabcolsep{3pt} 
\begin{tabularx}{\textwidth}{||p{13mm}|p{6mm}|p{3mm}||p{5mm}|p{3mm}|X|X|X||p{6mm}|p{3mm}|X|X|X||p{6mm}|p{3mm}|X|X|X||}
\toprule
\textbf{Method}& $B_{loc}$ & $U$ & {} & $\mathbf{Q}$ & \textbf{Tr.A.} & \textbf{Te.A.} & \textbf{$T$} 
& {} & $\mathbf{Q}$ & \textbf{Tr.A.} & \textbf{Te.A.} & \textbf{$T$}
& {} & $\mathbf{Q}$ & \textbf{Tr.A.} & \textbf{Te.A.} & \textbf{$T$}\\
\midrule
\cellcolor{lapsgd!50}\alsgd & 64 &  4 &  (1)&      2 &     \cellcolor{lapsgd!50}100.0 &     \cellcolor{lapsgd!50}94.92 &     \cellcolor{lapsgd!50}6368  &(7)&       4 &     \cellcolor{lapsgd!50}100.0 &     \cellcolor{lapsgd!50}94.91 &     \cellcolor{lapsgd!50}3092   &(13)&   8 &     \cellcolor{lapsgd!50}100.0 &     \cellcolor{lapsgd!50}94.69 &     \cellcolor{lapsgd!50}1562 \\
\cellcolor{lppsgd!50}\palsgd & 64 &  3 &  (2)&       2 &      \cellcolor{lppsgd!50}99.97 &     \cellcolor{lppsgd!50}94.89 &     \cellcolor{lppsgd!50}5551   &(8)&        4 &      \cellcolor{lppsgd!50}99.96 &     \cellcolor{lppsgd!50}94.87 &     \cellcolor{lppsgd!50}2724  &(14)&    8 &      \cellcolor{lppsgd!50}99.99 &     \cellcolor{lppsgd!50}93.78 &     \cellcolor{lppsgd!50}1377 \\
\mbsgd &128 & - &   (3)&      2 &     100.0 &     93.98 &     8265  &(9)&       4 &     100.0 &     94.22 &     4045   &(15)&   8 &     100.0 &     93.83 &     2116 \\
\mbsgd &512 & - &   (4)&      2 &      95.68 &     88.42 &     7920    &(10)&       4 &      99.87 &     89.46 &     3913  &(16)&   8 &      87.54 &     73.97 &     2076 \\
\plsgd &128 & - &   (5)&      2 &      99.99 &     94.21 &     8293  &(11)&       4 &      99.98 &     94.02 &     4022  &(17)&   8 &      99.98 &     94.18 &     2105 \\
\plsgd &512 & - &   (6)&      2 &      95.34 &     88.33 &     7829  &(12)&       4 &      98.52 &     84.45 &     3784  &(18)&   8 &      89.19 &     81.80 &     1977 \\
\bottomrule
\end{tabularx}

			\captionof{table}{\small\wres-16x8/\cft training in the settings \sone: $\mathbf{Q}=2$, \stwo: $\mathbf{Q}=4$, and \sthr: $\mathbf{Q}=8$.}\label{tab:scale}
		\end{minipage}
	\end{minipage}
\end{figure*}
\textbf{Synchronization Frequency.} As mentioned, we follow the strategy introduced by \plsgd of setting $K=1$ for an initial set of iterations followed by increasing $K$. However, unlike \plsgd, intuitively, with our non-blocking synchronization among GPUs, the increment events on the shared-counter $\mathcal{C}^q$ are ``grouped" on the real-time scale if the updater processes $u^q$ do not face delays in scheduling, which we control by allocating an optimal number of processes to maximally utilize the compute resources. For instance, Table \ref{tab:avg} lists the results of 5 random runs for 300 epochs over setting \sone of \rnt/\cft training with $B_{loc}=128$ and $U=6$ with different synchronization frequencies $K$.  $\mathbf{P}$ denotes the number of epochs up to which $K$ is necessarily $1$. This micro-benchmark indicates that the overall latency and the final optimization result of our method may remain robust under small changes in $K$, which may not necessarily be the case with \plsgd. 

\textbf{Scalability.} Table \ref{tab:scale} presents the results of \wres-16x8/\cft training in the settings \sone, \stwo, and \sthr. We observe that in each setting the relative performance of different methods are approximately consistent. We observe a perfect scalability with the number of GPUs. Interestingly, increasing $B_{loc}$ from 128 to 512 does not improve the latency by more than $\sim$4\% of \mbsgd and \plsgd -- this non-linear scalability of data-parallelism was also observed by \cite{Lin2020DontUL}. However, there is clear degradation in optimization results by larger $B_{loc}$ for \mbsgd and \plsgd. On the other hand, \alsgd and \palsgd not only speeds-up the training but also consistently improve on the baseline generalization accuracy.

\ignore{In particular, we note the following: (a) reduced communication cost helps \plsgd marginally outperform \mbsgd, (b)  interestingly, increasing $B_{loc}$ from 128 to 512 does not improve the latency by more than $\sim$4\%; this non-linear scalability of data-parallelism was also observed by \cite{Lin2020DontUL} , (c) each of the methods scale by more than 2x as the implementation is moved to \stwo, which has 40 CPU cores, from \sone which has 12 CPU cores, furthermore, this scalability is approximately 2x w.r.t. performance on \sthr in comparison to \stwo, this shows that for each of the methods we utilize available compute resources maximally, (d) in each setting \alsgd achieves $\sim$30\% better throughput compared to \mbsgd standard batch, (e) in each setting \palsgd outperforms \alsgd by $\sim$12\% making it the fastest method, (f) the training and test accuracy of local large minibatch methods are poor, and (g) \alsgd and \palsgd consistently improve on the baseline generalization accuracy.
}
\textbf{Other \cft/\cfh Results.} Performance of the methods on \wres-168/\cfh and \sqn/\cft for 200 epochs are available in Tables \ref{tab:Arch_wnet168_Data_cifar100} and \ref{tab:Arch_squuezenet_Data_cifar10}, respectively. 
The relative latency of the methods are as seen in Table \ref{tab:scale}, whereas, in each case \palsgd recovers or improves the baseline results.

\begin{figure}[h]
	\setlength\tabcolsep{3pt} 
	\centering\small
	\begin{tabularx}{\columnwidth}{p{12.8mm}|X|p{2.8mm}|X|X|X|X|X}
		\toprule
		\textbf{Method} & $B_{loc}$ & $U$ & \textbf{Tr.L.} & \textbf{Te.L.} & \textbf{Tr.A.} & \textbf{Te.A.} & \textbf{$T$}\\
		\midrule
		\alsgd &       32 &             4 &     0.823 &      77.97 &    1.215 &     {76.58} &    54243 \\
		\palsgd &       32 &             4 &     0.834 &      77.31 &    1.257 &     \textbf{76.11} &    \textbf{48066} \\
		\mbsgd &      128 &             - &     0.809 &      78.63 &    1.220 &     76.07 &    52395 \\
		\plsgd &      128 &             - &     0.818 &      78.22 &    1.223 &     76.12 &    50402 \\		
		\bottomrule
	\end{tabularx}\captionof{table}{\small \res-50/\imn  Training results.}\label{tab:rn50}
\end{figure}

\textbf{Imagenet Training Results.} We used the setting \sfor for training \res-50 on 1000-classes Imagenet dataset.
\ignore{In comparison to \cite{Goyal2017AccurateLM,You2017ImageNetTI,Akiba2017ExtremelyLM,Mikami2018ImageNetResNet50TI}, etc., wherein batch-size up to multiple thousands are uses, our experiments definitely face platform constraints.} The computational resources of a GPU for \res-50/\imn training are almost saturated, which can be seen in higher latency of \alsgd in Table \ref{tab:rn50}. Yet, we observe  that \palsgd exhibits improved performance with competitive generalization accuracy indicating the efficacy of partial back-propagation.
 
\begin{figure}[h]
	\setlength\tabcolsep{6pt} 
	\centering\small
	\begin{tabularx}{\columnwidth}{p{14.5mm}|p{3mm}|X|X|X|X}
		\toprule
		\textbf{Method} & $U$& $B_{loc}$ & Tr.A. & Te.A. & $T$\\
		\midrule
		\mbsgd& 1 & 1024 & 95.84 & 85.67& 1324\\
		\plsgd& 1 & 1024 & 93.67 & 84.69& 1304 \\
		\alsgd & 4 &256 & 99.87 & 92.90& 1232\\
		\palsgd& 4 &256 & 99.59 & 92.83&1071\\
		\bottomrule
	\end{tabularx}\captionof{table}{\small LARS performance.}\label{tab:lars}
\end{figure}
\textbf{LR Tuning strategy.} It is pertinent to mention that the  techniques, such as LARS \citep{You2017LargeBT}, which provide an adaptive LR strategy for large-batch regimes over many GPUs, with each worker locally processing a small batch, are insufficient in the case $B_{loc}$ is increased. For example, see Table \ref{tab:lars}, which lists the average performance of 5 runs on the setting \sone to train \rnt/\cft using \mbsgd and \plsgd combined with LARS with $\eta=0.001$ \citep{You2017LargeBT}. We proportionately scale the LR from $\alpha_{0}$ to $\alpha_{0}\times\frac{B_{loc}{\times}Q}{B_{loc{\mhyphen}base}}$, where $B_{loc{\mhyphen}base}=128$, $Q=2$ and $\alpha_{0}=0.1$. Evidently, LARS did not help \mbsgd and \plsgd in checking the poor generalization due to larger $B_{loc}$.

\section{Convergence Theory}

\label{sec:convergence}
Note that, unlike \alsgd or \palsgd, the previous works on Local SGD analyses, e.g.~\cite{zhou2017convergence, stich2018local, HaddadpourKMC19}, etc. did not have to consider either lock-free multiple local gradient updates, or in-place overlapping model updates by averaging. Clearly, significant challenges arise in our case because (a) the snapshots used for gradient computations are inconsistent due to the lock-free concurrent gradient updates, and (b) they also could potentially have been read from the local shared-memory before the last averaging across workers could take place -- a novel challenge for any analysis. 

Nevertheless, we firstly need to fix a sequence of model estimates for convergence. To this end, we define an abstraction comprising two auxiliary variables as described below. For simplicity we use a constant LR $\alpha$ in this discussion. 
\paragraph{Minor iteration view:} In the local lock-free setting of each worker $q\in\mathcal{Q}$, we follow the elastic consistency model of \cite{alistarh2020elastic} for shared-memory asynchronous SGD. Now, because we are working at the level of blocks, we define the operation $-|_i$ to denote subtraction from the $i^{th}$ block. For example, if $a=\begin{pmatrix} 1 & 2 & 3 & 4 \end{pmatrix}^T$ and $b=\begin{pmatrix} -10 & -20 \end{pmatrix}^T$, we have that $a-|_{\block{2}{3}} b = \begin{pmatrix} 1 & 12 & 23 & 4 \end{pmatrix}^T$. With that, considering worker $q$, the ${t+1}^{st}$ \textit{minor iteration view} after the $j^{th}$ averaging round, assuming that the $s^{th}$ ($s$ is a read value of the shared counter $\mathcal{C}^q$) snapshot $v^{q,s}$ was used for partial stochastic gradient computation is defined by the following iteration:
\begin{equation}\label{eqn:minoriter}
		\begin{array}{l}	w_{j,t+1}^{q}:=w_{j,t}^{q} -|_i \alpha \tn_{i} f(v^{q,s}).\end{array}
	\end{equation}
\paragraph{Major iteration view:} Let $K_j^q$ be the final minor iteration count on worker $q$ before the $(j{+}1)$st synchronization round, we define the $j^{th}$ \textit{major iteration view} as 
\begin{equation}\label{eqn:majoriter}
	\begin{array}{l}	\bar{x}_{j}:=\frac{1}{Q}\sum_{q=1}^Q w_{j,K_j^q}^{q}.\end{array}
	\end{equation}
This $\bar{x}_{j}$ is the \textit{averaged iterate} that we take as the reference sequence in our convergence theory. At initialization, $\bar{x}_{0}=v^{q,0}~\forall~ q\in\mathcal{Q}$. The update rule of $\bar{x}_{j}$ can be written as
\begin{equation}\label{eq:update}
\begin{array}{l}\bar x_{j+1}=\frac{1}{Q}\sum_{q=1}^Q\left[\bar x_j -|_i
	\alpha\sum_{t=0}^{K_j^q}  \tn_{i}f(v^{q,s})\right].	\end{array}
\end{equation}
To streamline these views, we use the following notions:
\begin{enumerate}[leftmargin=*,nosep,topsep=0pt,nolistsep,noitemsep]
	\item The \textbf{Snapshot Order} of a minor iteration view is the value of $\mathcal{C}^q$ before the snapshot collection.
	\item The \textbf{Update Order} of a minor iteration view is defined as the order of the {atomic read steps} of the model before starting its update with a local gradient by a process.
	\item The \textbf{Iteration Order} of the $t^{th}$ minor iteration after the $j^{th}$ major iteration is the tuple $(j,t)$.
\end{enumerate}

More detail on these orders are provided in Appendix \ref{append:proof}. It is pertinent to mention here that the selection of a block $i\in\mathcal{I}$ depends on the worker $q$ and the iteration order $(j,t)$. Thus, precisely expressing, $i\equiv i(q,j,t)$. Table \ref{tab:extnotation} lists out some extra notations that we will be frequently using in this discussion in addition to those listed out in Table \ref{tab:notation}. 
\begin{figure}[h]
	\setlength\tabcolsep{2pt} 
	\centering
	\begin{tabularx}{\columnwidth}{|p{2.5cm}|X|}
		\toprule
		Notation & Description\\
		\midrule 
		$w^{q}_{j,t}$, $w_{u}^{q}$, $w^{q,s}$& The minor iteration view with iteration order $(j,t)$, with update order $u$, with snapshot order $s$. \\ %
		$v^{q}_{j,t}$, $v_{j,t}^{q}[e]$ & Snapshot vector to compute a partial stochastic gradient for the minor iteration $(j,t)$, its $e^{th}$ index. \\
		$\tn_{i}f(v^{q}_{j,t})$& Stochastic gradient of $f(v^{q}_{j,t})$ w.r.t. its $i(q,j,t)^{th}$ block. \\
		\bottomrule
	\end{tabularx}\captionof{table}{Additional frequently used notations.}\label{tab:extnotation}
\end{figure}%

With these views, in order to obtain a converging sequence of averaged iterates, we need to place a reasonable bound on the inconsistencies due to gradient computations for minor iterations using the snapshots that may have components read before \textit{the latest major iteration}. To this end, we describe a stochastic delay model as described below.

\ignore{The locally asynchronous gradient computation using inconsistent snapshots of the shared-memory model on each worker introduces delays in terms of number of iterations before the updates. }
Because the read and updates at the level of indexes of the model vector are atomic, we have the following:
\begin{equation}\label{eq:minorview}
	\begin{array}{l}
		v^{q}_{j,t}[e]\in \cup_{k\in\{0,1,\ldots,u\}} \{w^{q}_{k}[e]\},
	\end{array}
\end{equation}
where $e\in i(q,j,t)$ and  $i(q,j,t)\subseteq\{1,2,\ldots,d\}$ and $u$ is the update order of minor iteration $(j,t)$. 
Essentially, relation (\ref{eq:minorview}) captures the fact that a component of an inconsistent snapshot would belong to a \textit{historical minor iteration view}. Notice that, as the lock-free read and updates of the shared model work at the level of indexes, we have an advantage of dropping the dependence of the expressions over the blocks $i\in\mathcal{I}$ in the convergence analysis. 
We can now define the event describing relation (\ref{eq:minorview}) as the following. 

\paragraph{Delay Event and Probability.} Define $\mathbb{I}^{q,k,e}_{j,t}$ as the event that $v^{q}_{j,t}[e]=w^{q}_{k}[e]$, and $p^{q,k,e}_{j,t}$ as its probability, for some $j\ge 0$, $0\le t \le K_j^q$, and  $0\le k \le u$, where $u$ is the update order of minor iteration view $(j,t)$. We call $\mathbb{I}^{q,k,e}_{j,t}$ a \textit{delay event}.
Notice that, defining this event captures the possibility that $v^{q}_{j,t}$ could have been collected  at a minor iteration before the synchronization round $j$. 

\paragraph{Assumptions Governing the Asynchronous Computation.} In the non-convex setting,  in general, we can simply hope that the objective function decrease achieved by local asynchronous SGD update with a gradient computed after an averaging step outweighs the potential worst-case increase that might take place due to an update by a gradient computed before the averaging step. In order to derive an expected decrease in the objective, we need to bound the probability of an increase, which means bounding the probability that a component of the snapshot was collected at a minor iteration older than the previous averaging.

The significant quantity that separates these expected favorable and unfavorable iterations is,
\[
\mathbb{P}\left[\cap_e \cup_{l\ge k}^u \mathbb{I}^{q,l,e}_{j,t}\right],
\]
where $u$ is the update order of minor iteration $(j,t)$ and $k$ is the update order of the minor iteration just after the $j^{th}$ synchronization round. This quantity is \emph{the probability that every component of the inconsistent snapshots were due to the minor iterations after  $a^q$ performed averaging}.

\begin{assumption}\label{as:delays}
	For all $(j,t,q)$ it holds that the quantity
	\[
	\bar{\mathbf{p}}^{(j,t,q)}:=\mathbb{P}\left[\cap_e \cup_{l\ge k}^u \mathbb{I}^{q,l,e}_{j,t}\right]
	\]
	satisfies that there exists $p_0>0$ such that $\bar{\mathbf{p}}^{(j,t,q)} > p_0$ for all $(j,t,q)$.
\end{assumption}


Assumption \ref{as:delays} implies that an inconsistent snapshot read at line \ref{line:updatesnap} to compute a partial stochastic gradient does not consist of components updated by minor iterations too differing in their update orders.  

\paragraph{Elastic Consistency.} For lock-free local updates, we can make a statement to bound the error in the sense of \textit{Elastic Consistency} of \cite{alistarh2020elastic}. In particular, if $s$ is the snapshot order of minor iteration $(j,t)$, 
it holds that,
\begin{equation}\label{eq:consistency1}
	\mathbb{E}\left[\left\|w^{q}_{j,t}-v^{q,s}\right\| | \cap_e \cup_{l\ge k}^u \mathbb{I}^{q,l,e}_{j,t}\right]  \le \alpha^2 B^2.
\end{equation}
In inequation (\ref{eq:consistency1}) the event $\cap_e \cup_{l\ge k}^u \mathbb{I}^{q,l,e}_{j,t}$ used for conditioning is precisely the cases where the snapshots for computing partial stochastic gradients were taken after the last averaging round, i.e., if every component satisfies this condition then the elastic consistency bound holds.

Note that under this condition, the maximum delay in terms of number of local asynchronous SGD updates is $\tau_{max}=\bar{K}:=\max_{j,q} K^q_j$. Thus, following  Appendix B.6 of~\cite{alistarh2020elastic}, the following Lemma defines the Elastic consistency constant:
\begin{lemma}
	$B=\sqrt{d}\bar{K}M$, where $d$ is the dimension of the model, $\bar{K}$ is the maximum minor iteration count defined above, and $M$ is as defined for the second moment bound in Equation \ref{eqn:secondmoment}.
\end{lemma}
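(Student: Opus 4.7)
The plan is to bound $\mathbb{E}[\|w^q_{j,t}-v^{q,s}\|^2\mid\cdot]$ by accounting for (i) the componentwise atomicity of the shared-memory read/write operations, and (ii) the maximum number of block-gradient updates that can separate the snapshot from the current view under the conditioning event. I will interpret bound~(\ref{eq:consistency1}) as the standard elastic-consistency bound on the second moment $\mathbb{E}[\|w^q_{j,t}-v^{q,s}\|^2\mid\cdot]\le\alpha^2 B^2$, following Appendix~B.6 of~\cite{alistarh2020elastic}.

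First, by per-coordinate atomicity of the \rdo operations at line~\ref{line:updatesnap}, for every $e\in\{1,\ldots,d\}$ there exists an update order $k_e\in\{0,\ldots,u\}$ with $v^{q,s}[e]=w^q_{k_e}[e]$, where $u$ is the update order of minor iteration $(j,t)$. Conditioning on $\cap_e\cup_{\ell\ge k}^u\mathbb{I}^{q,\ell,e}_{j,t}$ forces every $k_e\ge k$, where $k$ is the update order of the first minor iteration after the most recent averaging on worker $q$. Consequently, $(w^q_{j,t}-v^{q,s})[e]$ equals $-\alpha$ times the sum of the $e$-th components of all block stochastic gradients that modified index $e$ between update orders $k_e{+}1$ and $u$, and the number of nonzero summands is at most $\bar{K}=\max_{j,q}K^q_j$ by definition.

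Second, I would apply Cauchy--Schwarz coordinatewise to obtain $|(w^q_{j,t}-v^{q,s})[e]|^2\le\alpha^2\bar{K}\sum_\ell(\tn_{i(q,\ell)}f[e])^2$, take conditional expectation, and use the trivial pointwise domination $(\tn_i f[e])^2\le\|\tn_i f\|^2$ together with the bounded second-moment assumption~(\ref{eqn:secondmoment}) (which holds pointwise in the argument and thus transfers to the conditional expectation via the tower property) to obtain $\mathbb{E}[|(w^q_{j,t}-v^{q,s})[e]|^2\mid\cdot]\le\alpha^2\bar{K}^2M^2$. Summing over the $d$ coordinates yields $\mathbb{E}[\|w^q_{j,t}-v^{q,s}\|^2\mid\cdot]\le d\,\alpha^2\bar{K}^2M^2=\alpha^2(\sqrt{d}\,\bar{K}\,M)^2$, identifying $B=\sqrt{d}\,\bar{K}\,M$ as claimed.

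The main obstacle I anticipate is the careful bookkeeping for the conditioning event: one must verify that it rules out missed updates from \emph{before} the last averaging round, so that $\bar{K}$ suffices as a delay bound; otherwise, the missed updates could span multiple synchronizations and inflate the count. A secondary subtlety is that the stochastic gradients $\tn_{i(q,\ell)}f$ are themselves computed at inconsistent snapshots $v^{q,\ell}$, so the second-moment bound must be invoked pointwise in the argument and then combined with the tower property rather than as an \emph{a priori} unconditional bound. The factor $\sqrt{d}$, which is loose compared to the tighter aggregated bound $\bar{K}M$ one could derive by swapping the order of summation, is the slack incurred by passing through a coordinate-wise bound before reassembling the $\ell_2$ norm, and it is this conversion step that produces the stated elastic consistency constant.
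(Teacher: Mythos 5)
Your proposal is correct and follows essentially the same route as the paper, which does not spell out a proof but defers to Appendix~B.6 of the elastic-consistency reference: per-coordinate atomicity bounds the snapshot staleness by at most $\bar{K}$ missed block updates under the conditioning event, Cauchy--Schwarz plus the second-moment bound gives $\alpha^2\bar{K}^2M^2$ per coordinate, and summing over $d$ coordinates yields $B=\sqrt{d}\,\bar{K}M$. The only point to tighten is your appeal to the tower property for the conditional second moment: strictly one should invoke independence of the minibatch sampling from the scheduling-induced delay events (so that conditioning on $\cap_e\cup_{\ell\ge k}^u\mathbb{I}^{q,\ell,e}_{j,t}$ does not inflate $\mathbb{E}\|\tn_i f\|^2$ beyond $M^2$), an implicit assumption the paper also makes.
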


We now present the main theorem proposing the ergodic convergence for a general non-convex objective:

\begin{theorem}\label{th:convergence}
	Under the Assumptions ~\ref{eqn:unbiased},~\ref{eqn:Lower bound},~\ref{eqn:smoothness},~\ref{eqn:secondmoment}, ~\ref{eqn:variance}, defining $f_m$, $L$, $M$, $\sigma$ and the stochastic delay model described by Assumption \ref{as:delays}, it holds that,
\begin{equation}\label{eq:conv}
\begin{array}{l}
\frac{1}{Q} \sum_{j=1}^J \sum_{q=1}^Q \sum_{t=0}^{K_j^q}\left\{
[\alpha C_1-\alpha^2 C_2]\mathbb{E}\left\|\nabla_{i} f(v^{q,s})\right\|^2\right.\\
\left.+[\alpha C_3-\alpha^2C_4] \mathbb{E}\left\|\nabla_{i} f(\bar{x}_j)\right\|^2 \right\}\le f(\bar{x}_0)-f_m
\end{array}
\end{equation}
where $C_1$, $C_2$, $C_3$, and $C_4$ depend on $L$, $B$, $M$, $\sigma$ and probabilistic quantities defining the asynchronous computation. 

Thus for any set of such constants, if sufficiently small $\alpha=\Theta(1/\sqrt{J})$ is chosen 
then Algorithm~\ref{alg:algo} ergodically converges with the standard  $O(1/\sqrt{J})$ rate for nonconvex objectives.
\end{theorem}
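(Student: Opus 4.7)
The plan is to adapt the standard non-convex SGD descent-lemma analysis (tracking $\mathbb{E}[f(\bar{x}_{j+1})-f(\bar{x}_j)]$) to the present setting by carefully handling three sources of perturbation: (i) the block-partitioned nature of the partial gradients, (ii) the inconsistent lock-free snapshots $v^{q,s}$ used inside a worker, and (iii) the drift between the local iterate $w_{j,t}^q$ and the averaged reference $\bar{x}_j$ between synchronization rounds. The reference sequence is $\bar{x}_j$ defined in (\ref{eqn:majoriter}), whose one-step update is given by (\ref{eq:update}). Because the per-index reads/writes are atomic, I can drop the block subscript after taking expectations and work index-by-index, which is what makes the block-asynchronous analysis tractable.

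First, I would apply $L$-smoothness to $\bar{x}_{j+1}-\bar{x}_j=-\frac{\alpha}{Q}\sum_q\sum_t \tn_{i}f(v^{q,s})$ (with the block-restricted subtraction $-|_i$) to obtain
\begin{equation*}
f(\bar{x}_{j+1})\le f(\bar{x}_j)-\tfrac{\alpha}{Q}\sum_{q,t}\langle \nabla_i f(\bar{x}_j),\tn_i f(v^{q,s})\rangle+\tfrac{L\alpha^2}{2Q^2}\Big\|\sum_{q,t}\tn_i f(v^{q,s})\Big\|^2.
\end{equation*}
Using unbiasedness (\ref{eqn:unbiased}), the cross term yields $\langle\nabla_i f(\bar{x}_j),\nabla_i f(v^{q,s})\rangle$, which I would split as $\tfrac12\|\nabla_i f(\bar{x}_j)\|^2+\tfrac12\|\nabla_i f(v^{q,s})\|^2-\tfrac12\|\nabla_i f(\bar{x}_j)-\nabla_i f(v^{q,s})\|^2$. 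The quadratic term expands into a diagonal part handled by the bounded variance (\ref{eqn:variance}) and second moment (\ref{eqn:secondmoment}), and off-diagonal parts which are expectations of products of gradients taken at possibly correlated snapshots; these I would bound by $\|\nabla_i f(v^{q,s})\|^2$ plus cross pieces absorbed into $M^2$.

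The decisive step is controlling $\|\nabla_i f(v^{q,s})-\nabla_i f(\bar{x}_j)\|^2\le L^2\|v^{q,s}-\bar{x}_j\|^2$. I would condition on the "good" event $\mathcal{G}^{q,j,t}:=\cap_e\cup_{l\ge k}^u \mathbb{I}^{q,l,e}_{j,t}$ that every index of the snapshot was written after the last averaging. On $\mathcal{G}^{q,j,t}$, Elastic Consistency (\ref{eq:consistency1}) with constant $B=\sqrt{d}\bar K M$ gives $\mathbb{E}\|w^q_{j,t}-v^{q,s}\|^2\le\alpha^2B^2$, and the drift $\|w^q_{j,t}-\bar{x}_j\|$ between sync rounds is a standard local-SGD quantity that is $O(\alpha^2 \bar K^2 M^2)$. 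Summing gives a total $O(\alpha^2)$ error, which becomes a higher-order $\alpha^3$ contribution after multiplying by $\alpha$ — hence the $\alpha^2 C_2,\alpha^2 C_4$ terms in (\ref{eq:conv}). On the complement $\mathcal{G}^{q,j,t,c}$, which has probability $1-\bar{\mathbf{p}}^{(j,t,q)}\le 1-p_0$, I would use only $\mathbb{E}\|\tn_i f(v^{q,s})\|^2\le M^2$, ensuring this bad-event contribution does not destroy the positive $\alpha C_1,\alpha C_3$ coefficients — this is precisely where Assumption \ref{as:delays} enters, guaranteeing that the favorable event dominates.

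Combining these pieces, taking total expectations and rearranging produces an inequality of the form $(\alpha C_1-\alpha^2 C_2)\mathbb{E}\|\nabla_i f(v^{q,s})\|^2+(\alpha C_3-\alpha^2 C_4)\mathbb{E}\|\nabla_i f(\bar{x}_j)\|^2\le \mathbb{E}[f(\bar{x}_j)-f(\bar{x}_{j+1})]+\text{const}\cdot\alpha^2$, where $C_1,C_3$ carry factors of $p_0/Q$ and $C_2,C_4$ depend polynomially on $L,\bar K,M,\sigma,B$. Telescoping $j=0,\dots,J$ and using the lower bound (\ref{eqn:Lower bound}) gives (\ref{eq:conv}). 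Finally, choosing $\alpha=\Theta(1/\sqrt{J})$ makes $\alpha^2 C_2$ and $\alpha^2 C_4$ small compared to $\alpha C_1,\alpha C_3$ for large enough $J$, and the right-hand side scales like $f(\bar{x}_0)-f_m+O(1)$, yielding the ergodic $O(1/\sqrt{J})$ rate. The main obstacle is the conditional decomposition at the index level: one must verify that splitting gradient and variance bounds on the event $\mathcal{G}^{q,j,t}$ versus its complement remains consistent with the unbiasedness of $\tn_i f(v^{q,s})$ (which is taken over the sampling of $B^q_s$ only, not over the race-conditioned reads), and that the cross-worker correlations introduced by in-place overlapping averaging updates do not spoil the $1/Q$ variance-reduction typical of local SGD.
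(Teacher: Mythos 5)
Your overall route is the same as the paper's: apply the descent lemma to the averaged iterate $\bar{x}_j$ using the update (\ref{eq:update}), expand the inner product via the polarization identity $\langle a,b\rangle=\tfrac12\|a\|^2+\tfrac12\|b\|^2-\tfrac12\|a-b\|^2$, split $\mathbb{E}\|\nabla_i f(v^{q,s})-\nabla_i f(\bar{x}_j)\|^2$ by conditioning on the ``good'' event $\cap_e\cup_{l\ge k}^u\mathbb{I}^{q,l,e}_{j,t}$ of Assumption~\ref{as:delays}, invoke the elastic-consistency bound (\ref{eq:consistency1}) on that event, and telescope with $\alpha=\Theta(1/\sqrt{J})$. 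That skeleton is exactly the paper's proof (cf.\ (\ref{eq:nonconvexexpffvprobs})--(\ref{eq:probboundfullj})), and your additional care with the drift $\|w^q_{j,t}-\bar{x}_j\|$ is, if anything, more explicit than the paper.

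There is, however, one step that fails as you have written it: the treatment of the bad event. You propose to bound the conditional term on $(\cap_e\cup_{l\ge k}^u\mathbb{I}^{q,l,e}_{j,t})^c$ by the second-moment constant $M^2$ from (\ref{eqn:secondmoment}). That turns the bad-event contribution into an \emph{additive} term of order $\alpha(1-p_0)M^2$ per minor iteration --- order $\alpha$, not the $\alpha^2$ you claim in your penultimate paragraph. After summing over $j,q,t$ and normalizing by $\alpha\sum_j\sum_q K^q_j$, this residual does not vanish: you obtain convergence only to a neighborhood of radius $O((1-p_0)M^2)$, not the inequality (\ref{eq:conv}) (whose right-hand side is exactly $f(\bar{x}_0)-f_m$ with no additive constant) and not the $O(1/\sqrt{J})$ rate, unless you additionally assume $1-p_0=O(\alpha)$. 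The paper avoids this by bounding the bad-event conditional term as $\|\nabla_i f(v^{q}_{j,t})-\nabla_i f(\bar{x}_j)\|^2\le 2\|\nabla_i f(\bar{x}_j)\|^2+2\mathbb{E}\|\nabla_i f(v^{q}_{j,t})\|^2$, i.e.\ keeping it as gradient norms rather than constants, so that the bad-event mass is absorbed into the coefficients $C_1,C_3$ multiplying $\mathbb{E}\|\nabla_i f(\cdot)\|^2$ and no free-standing constant survives the telescoping; Assumption~\ref{as:delays} is then used only to ensure these coefficients stay bounded away from zero. Replace your $M^2$ bound with that gradient-norm bound and the rest of your argument goes through and matches the paper.
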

The core idea to prove Theorem \ref{th:convergence} is arguing that the worst-case increase in the objective due to "bad gradient updates", which are computed over the snapshots collected before the last averaging, are bounded. With such a bound we then derive the convergence of the "good updates" along the lines of standard local SGD analysis such as \cite{zhou2017convergence}. We defer the detailed proof to Appendix \ref{append:proof}.

\textbf{\large Distributed Speedup and Local scalability:}

\textbf{Distributed speedup} with the number of workers $Q$ can be ascertained from the dependence on $Q$ in Theorem \ref{th:convergence}. In implementation, the proposed methods exhibit an almost linear data-parallel speedup w.r.t. $Q$, as shown in Table \ref{tab:scale}, where the effect of synchronization and communication costs is limited. 

Formally, now consider, for instance~\cite{yu2019linear}; in their presentation of interpreting \textit{linear speedup} from the convergence theory, the learning rate $\alpha$ is declared to be proportional to $\sqrt{Q}$. In effect, since, in practice, the gradient updates are averaged across workers, the effective $\alpha$ is always divided by $Q$. So then we can consider, in the context of Theorem~\ref{th:convergence}, scaling $\alpha$ by $Q$ to attempt obtaining linear speedup effectively. However, this change would add an additional factor of $Q$ to the $\alpha^2$ terms in (\ref{eq:conv}), which will negatively balance the effect away from linear speedup, the extent of which will depend on the particular constants.   

Additional results indicating linear speedup for local SGD appear in~\cite{HaddadpourKMC19}. However, they use an additional assumption of a $PL$ inequality, which can relate the gradient norm to the objective value even for nonconvex functions. While their technical analysis is impressive and interesting, we restrict our theoretical results to considering the most general case.

\textbf{Local Scalability}, i.e., speedup with the number of local updater processes $U$, of asynchronous shared memory SGD can, in theory, achieve linear speedup while $U$ is bounded by $O(\sqrt{T})$, where $T$ is the total number of minibatch updates~\citep{LianHLL15}, if the delay probabilities are not affected by the number of processes. However, in practice, scheduling an additional process on a GPU has substantial overhead, which does influence the delay probabilities, thus limiting the speedup. For \palsgd the more partitions the blocks are separated across, the smaller the size of gradient update, which decays the theoretical speedup. However, in practice, the reduced gradient computation cost of partial backpropagation handsomely improves the balance towards faster performance. 

\section{Related Work}

\label{sec:related}	
In the shared-memory setting, \hw \cite{recht2011hogwild} is now a celebrated approach to scale SGD. However, it remains applicable to a centralized setting of a single worker, thus, is not known to have been practically utilized for fast large scale CNN training. Its success led to designs of variants which targeted specific system aspects of a multi-core machine. For example, Buckwild! \cite{Sa2015TamingTW} proposed using restricted precision training on a CPU. 

As mentioned \hwp \citep{zhang2016hogwild}, has a decentralized design  targeted to NUMA machines, thus, it resembles \alsgd. However, there are significant differences, which we discuss in Appendix \ref{append:morerelated}. 
In the local setting of a GPU, the partial backpropagation of \palsgd derives from \pasmp  \citep{kungurtsev2019asynchronous}, however, unlike them our gradient updates are lock-free. Nonetheless, the similarities end there only -- \palsgd is a decentralized method unlike \pasmp.

Periodic model averaging after several sequential local updates is now a popular approach for distributed SGD: \cite{WoodworthPS20,Stich19,0001DKD19,nadiradze2021decentralized}. This method has been analyzed with various assumptions on the implementation setting and the problem: \cite{0001MR20,WoodworthPSDBMS20,KoloskovaLBJS20}. Yet, unlike us, none of these methods utilize shared-memory asynchrony or partial backpropagation. 
\cite{Wang2020OverlapLA} proposed Overlap-local-SGD, wherein they suggested to keep a model copy at each worker, very similar to \hwp, which is simultaneously averaged when sequential computation for multiple iterations happen locally. They showed by limited experiments that it reduced the communication overhead in a non-iid training case based on \cft, however, not much is known about its performance in general cases.

 \cite{chen2016revisiting} report that asynchrony causes performance degradation based on limited empirical observations on specific hardware. Yet, others \citep{regatti2019distributed} showed that asynchrony favors generalization on a distributed parameter server. Our work empirically, as well as with analytical insights, establishes that asynchronous noise -- from concurrent updates and from distributed averaging -- is a boon for the generalization performance.

\section{Generalization Discussion}

\label{sec:discuss}	

In~\cite{Lin2020DontUL}, the model of SGD algorithms as a form of a Euler-Maruyama discretization of a Stochastic Differential Equation (SDE) presents the perspective that the batch size can correspond to the inverse of the injected noise. 
Whereas distributed SGD combined stochastic gradients as such effectively results in an SGD step with a larger batch size, local SGD, by averaging models rather than gradients, maintains the noise associated with the local small-batch gradients. 
Given the well-established benefits of greater noise in improving generalization accuracy (e.g.~\cite{smith2018bayesian} and others), this presents a heuristic argument as to why local SGD tends to generalize better than distributed SGD. 
In the Appendix we present an additional argument for why local SGD generalizes well. 

However, we see that our particular variation with asynchronous local updates and asynchronous averaging seems to provide additional generalization accuracy above and beyond local SGD. We provide the following explanation as to why this could be the case, again from the perspective of noise as it would appear in a SDE. Let us recall three facts,
\begin{enumerate}[leftmargin=*,nosep,topsep=0pt,nolistsep,noitemsep]
\item The noise appearing as a discretization of the Brownian motion term in the diffusion SDE, and, correspondingly the injected noise studied as a driver for increased generalization in previous works on neural network training is i.i.d.,
\item Clearly, the covariances of minibatch gradients as statistical estimates of the gradients at model estimates $x$ and $x'$ are going to be more similar when $x$ and $x'$ are closer together,
\item Locally asynchronous gradients computed via snapshots taken before a previous all-to-all averaging step, and thus far from the current state of the model in memory, present additional noise, though they provide additional challenge for the convergence theory (see Section~\ref{sec:convergence}).
\end{enumerate}
Thus, effectively, the presence of these ``highly asynchronous" stochastic gradients, while being potentially challenging from the convergence perspective, effectively brings the analogy of greater injected noise for local SGD by inducing greater probabilistic independence, i.e., the injected noise, for these updates, is far close to the i.i.d. noise that appears in a discretized SDE.

\section{Conclusion}

\label{sec:conclude}

Picking from where \cite{Golmant2018OnTC} concluded referring to their findings: \textit{``These results suggest that we should not assume that increasing the batch size for larger datasets will keep training times manageable for all problems. Even though it is a natural form of data parallelism for large-scale optimization, alternative forms of parallelism should be explored to utilize all of our data more efficiently"}, we introduced a fresh approach in the direction to address the challenging trade-off of optimization quality vs. scalability. 

There are a plethora of small scale model and dataset combinations, where the \textit{critical batch size}--after which the returns in terms of convergence per wall-clock time diminish--is small relative to existing system capabilities (\cite{Golmant2018OnTC}). To such cases \palsgd becomes readily useful. 

We observed that for \res-50/\imn training the performance gain of \palsgd over \plsgd is roughly 5\%. However, as other benchmarks showed, predictively, this will reasonably change on higher end GPUs, such as NVIDIA A100 with 40 GB memory and 624 teraflops peak performance. As an outlook, we considerately did not include the distributed scalability techniques such as quantized \citep{Alistarh2017QSGDCS} or sparsified- \citep{SeideFDLY14,AlistarhH0KKR18} communication, or for that matter, decentralized scalability methods such as SGP \citep{AssranLBR19}. These approaches can always supplement a locally scalable scheme, which is the focus of this work. Our future goal includes combining \palsgd and these techniques.

In the experiments, we observed that the system-generated noise in some cases effectively improved the generalization accuracy. The empirical findings suggest that the proposed variant of distributed \sgd has a perfectly appropriate place to fit in the horizon of efficient optimization methods for training deep neural networks. As a general guideline for the applicability of our approach, we would suggest the following: monitor the resource consumption of a GPU that trains a CNN, if there is any sign that the utilization was less than 100\%, try out \palsgd instead of arduously, and at times unsuccessfully, tuning the hyperparameters in order to harness the data-parallelism.

\bibliography{texts/refs}

\begin{thebibliography}{53}
\providecommand{\natexlab}[1]{#1}
\providecommand{\url}[1]{\texttt{#1}}
\expandafter\ifx\csname urlstyle\endcsname\relax
  \providecommand{\doi}[1]{doi: #1}\else
  \providecommand{\doi}{doi: \begingroup \urlstyle{rm}\Url}\fi

\bibitem[Akiba et~al.(2017)Akiba, Suzuki, and Fukuda]{Akiba2017ExtremelyLM}
Takuya Akiba, S.~Suzuki, and K.~Fukuda.
\newblock Extremely large minibatch sgd: Training resnet-50 on imagenet in 15
  minutes.
\newblock \emph{ArXiv}, abs/1711.04325, 2017.

\bibitem[Alistarh et~al.(2017)Alistarh, Grubic, Li, Tomioka, and
  Vojnovic]{Alistarh2017QSGDCS}
Dan Alistarh, Demjan Grubic, Jerry Li, Ryota Tomioka, and M.~Vojnovic.
\newblock Qsgd: Communication-efficient sgd via gradient quantization and
  encoding.
\newblock In \emph{NIPS}, 2017.

\bibitem[Alistarh et~al.(2018)Alistarh, Hoefler, Johansson, Konstantinov,
  Khirirat, and Renggli]{AlistarhH0KKR18}
Dan Alistarh, Torsten Hoefler, Mikael Johansson, Nikola Konstantinov, Sarit
  Khirirat, and C{\'{e}}dric Renggli.
\newblock The convergence of sparsified gradient methods.
\newblock In \emph{Advances in Neural Information Processing Systems}, pages
  5977--5987, 2018.

\bibitem[Assran et~al.(2019)Assran, Loizou, Ballas, and Rabbat]{AssranLBR19}
Mahmoud Assran, Nicolas Loizou, Nicolas Ballas, and Michael~G. Rabbat.
\newblock Stochastic gradient push for distributed deep learning.
\newblock In Kamalika Chaudhuri and Ruslan Salakhutdinov, editors, \emph{ICML},
  2019.

\bibitem[Basu et~al.(2019)Basu, Data, Karakus, and Diggavi]{0001DKD19}
Debraj Basu, Deepesh Data, Can Karakus, and Suhas~N. Diggavi.
\newblock Qsparse-local-sgd: Distributed {SGD} with quantization,
  sparsification and local computations.
\newblock In \emph{NeurIPS 2019}, pages 14668--14679, 2019.

\bibitem[Berglund(2011)]{berglund2011kramers}
Nils Berglund.
\newblock Kramers' law: Validity, derivations and generalisations.
\newblock \emph{arXiv preprint arXiv:1106.5799}, 2011.

\bibitem[Bottou(2012)]{bottou2012stochastic}
L{\'e}on Bottou.
\newblock Stochastic gradient descent tricks.
\newblock In \emph{Neural networks: Tricks of the trade}, pages 421--436.
  Springer, 2012.

\bibitem[Cartis and Scheinberg(2018)]{cartis2018global}
Coralia Cartis and Katya Scheinberg.
\newblock Global convergence rate analysis of unconstrained optimization
  methods based on probabilistic models.
\newblock \emph{Mathematical Programming}, 169\penalty0 (2):\penalty0 337--375,
  2018.

\bibitem[Chen et~al.(2016)Chen, Pan, Monga, Bengio, and
  Jozefowicz]{chen2016revisiting}
Jianmin Chen, Xinghao Pan, Rajat Monga, Samy Bengio, and Rafal Jozefowicz.
\newblock Revisiting distributed synchronous sgd.
\newblock \emph{arXiv preprint arXiv:1604.00981}, 2016.

\bibitem[Dai and Zhu(2018)]{dai2018towards}
Xiaowu Dai and Yuhua Zhu.
\newblock Towards theoretical understanding of large batch training in
  stochastic gradient descent.
\newblock \emph{arXiv preprint arXiv:1812.00542}, 2018.

\bibitem[Dean et~al.(2012)Dean, Corrado, Monga, Chen, Devin, Le, Mao, Ranzato,
  Senior, Tucker, Yang, and Ng]{DeanCMCDLMRSTYN12}
Jeffrey Dean, Greg Corrado, Rajat Monga, Kai Chen, Matthieu Devin, Quoc~V. Le,
  Mark~Z. Mao, Marc'Aurelio Ranzato, Andrew~W. Senior, Paul~A. Tucker, Ke~Yang,
  and Andrew~Y. Ng.
\newblock Large scale distributed deep networks.
\newblock In \emph{26th Annual Conference on Neural Information Processing
  Systems 2012}, pages 1232--1240, 2012.

\bibitem[Golmant et~al.(2018)Golmant, Vemuri, Yao, Feinberg, Gholami, Rothauge,
  Mahoney, and Gonzalez]{Golmant2018OnTC}
Noah Golmant, Nikita Vemuri, Zhewei Yao, Vladimir Feinberg, A.~Gholami, Kai
  Rothauge, M.~Mahoney, and Joseph~E. Gonzalez.
\newblock On the computational inefficiency of large batch sizes for stochastic
  gradient descent.
\newblock \emph{ArXiv}, abs/1811.12941, 2018.

\bibitem[Goyal et~al.(2017)Goyal, Doll{\'a}r, Girshick, Noordhuis, Wesolowski,
  Kyrola, Tulloch, Jia, and He]{Goyal2017AccurateLM}
Priya Goyal, P.~Doll{\'a}r, Ross~B. Girshick, P.~Noordhuis, L.~Wesolowski, Aapo
  Kyrola, Andrew Tulloch, Y.~Jia, and Kaiming He.
\newblock Accurate, large minibatch sgd: Training imagenet in 1 hour.
\newblock \emph{ArXiv}, abs/1706.02677, 2017.

\bibitem[Haddadpour et~al.(2019)Haddadpour, Kamani, Mahdavi, and
  Cadambe]{HaddadpourKMC19}
Farzin Haddadpour, Mohammad~Mahdi Kamani, Mehrdad Mahdavi, and Viveck~R.
  Cadambe.
\newblock Local {SGD} with periodic averaging: Tighter analysis and adaptive
  synchronization.
\newblock In \emph{NeurIPS}, pages 11080--11092, 2019.

\bibitem[He et~al.(2016)He, Zhang, Ren, and Sun]{He2016DeepRL}
Kaiming He, X.~Zhang, Shaoqing Ren, and Jian Sun.
\newblock Deep residual learning for image recognition.
\newblock \emph{2016 IEEE Conference on Computer Vision and Pattern Recognition
  (CVPR)}, pages 770--778, 2016.

\bibitem[Iandola et~al.(2017)Iandola, Moskewicz, Ashraf, Han, Dally, and
  Keutzer]{Iandola2017SqueezeNetAA}
Forrest~N. Iandola, Matthew~W. Moskewicz, K.~Ashraf, Song Han, W.~Dally, and
  K.~Keutzer.
\newblock Squeezenet: Alexnet-level accuracy with 50x fewer parameters and <1mb
  model size.
\newblock \emph{ArXiv}, abs/1602.07360, 2017.

\bibitem[Keskar et~al.(2019)Keskar, Nocedal, Tang, Mudigere, and
  Smelyanskiy]{keskar2019large}
Nitish~Shirish Keskar, Jorge Nocedal, Ping Tak~Peter Tang, Dheevatsa Mudigere,
  and Mikhail Smelyanskiy.
\newblock On large-batch training for deep learning: Generalization gap and
  sharp minima.
\newblock In \emph{5th International Conference on Learning Representations,
  ICLR 2017}, 2019.

\bibitem[Khaled et~al.(2020)Khaled, Mishchenko, and Richt{\'{a}}rik]{0001MR20}
Ahmed Khaled, Konstantin Mishchenko, and Peter Richt{\'{a}}rik.
\newblock Tighter theory for local {SGD} on identical and heterogeneous data.
\newblock In \emph{AISTATS}, volume 108 of \emph{Proceedings of Machine
  Learning Research}, pages 4519--4529. {PMLR}, 2020.

\bibitem[Koloskova et~al.(2020)Koloskova, Loizou, Boreiri, Jaggi, and
  Stich]{KoloskovaLBJS20}
Anastasia Koloskova, Nicolas Loizou, Sadra Boreiri, Martin Jaggi, and
  Sebastian~U. Stich.
\newblock A unified theory of decentralized {SGD} with changing topology and
  local updates.
\newblock In \emph{ICML}, volume 119 of \emph{Proceedings of Machine Learning
  Research}, pages 5381--5393. {PMLR}, 2020.

\bibitem[Krizhevsky(2009)]{Krizhevsky2009LearningML}
A.~Krizhevsky.
\newblock Learning multiple layers of features from tiny images.
\newblock 2009.

\bibitem[Kungurtsev et~al.(2021)Kungurtsev, Egan, Chatterjee, and
  Alistarh]{kungurtsev2019asynchronous}
V.~Kungurtsev, Malcolm Egan, Bapi Chatterjee, and Dan Alistarh.
\newblock Asynchronous stochastic subgradient methods for general nonsmooth
  nonconvex optimization.
\newblock \emph{AAAI}, 2021.

\bibitem[Lian et~al.(2015)Lian, Huang, Li, and Liu]{LianHLL15}
Xiangru Lian, Yijun Huang, Yuncheng Li, and Ji~Liu.
\newblock Asynchronous parallel stochastic gradient for nonconvex optimization.
\newblock In \emph{Neural Information Processing Systems 2015}, pages
  2737--2745, 2015.

\bibitem[Lin et~al.(2020)Lin, Stich, and Jaggi]{Lin2020DontUL}
Tao Lin, S.~Stich, and M.~Jaggi.
\newblock Don't use large mini-batches, use local sgd.
\newblock \emph{ICLR}, 2020.

\bibitem[LocalSGD(2020)]{localsgd}
LocalSGD.
\newblock \url{https://github.com/epfml/LocalSGD-Code}.
\newblock 2020.

\bibitem[Loshchilov and Hutter(2017)]{Loshchilov2017SGDRSG}
I.~Loshchilov and F.~Hutter.
\newblock Sgdr: Stochastic gradient descent with warm restarts.
\newblock In \emph{ICLR}, 2017.

\bibitem[Mikami et~al.(2018)Mikami, Suganuma, U.-Chupala, Tanaka, and
  Kageyama]{Mikami2018ImageNetResNet50TI}
Hiroaki Mikami, Hisahiro Suganuma, Pongsakorn U.-Chupala, Yoshiki Tanaka, and
  Y.~Kageyama.
\newblock Imagenet/resnet-50 training in 224 seconds.
\newblock \emph{ArXiv}, abs/1811.05233, 2018.

\bibitem[MPS(2020)]{mpsnvidia}
Nvidia MPS.
\newblock https://docs.nvidia.com/deploy/mps/index.html.
\newblock 2020.

\bibitem[Nadiradze et~al.(2021{\natexlab{a}})Nadiradze, Markov, Chatterjee,
  Kungurtsev, and Alistarh]{alistarh2020elastic}
Giorgi Nadiradze, Ilia Markov, Bapi Chatterjee, Vyacheslav Kungurtsev, and Dan
  Alistarh.
\newblock Elastic consistency: A general consistency model for distributed
  stochastic gradient descent.
\newblock In \emph{AAAI}, 2021{\natexlab{a}}.

\bibitem[Nadiradze et~al.(2021{\natexlab{b}})Nadiradze, Sabour, Davies, Markov,
  Li, and Alistarh]{nadiradze2021decentralized}
Giorgi Nadiradze, Amirmojtaba Sabour, Peter Davies, Ilia Markov, Shigang Li,
  and Dan Alistarh.
\newblock Decentralized {\{}sgd{\}} with asynchronous, local and quantized
  updates, 2021{\natexlab{b}}.

\bibitem[NCCL(2020)]{ncclnvidia}
NCCL.
\newblock \url{https://docs.nvidia.com/deeplearning/nccl/index.html}.
\newblock 2020.

\bibitem[Paszke et~al.(2017)Paszke, Gross, Chintala, Chanan, Yang, DeVito, Lin,
  Desmaison, Antiga, and Lerer]{paszke2017automatic}
Adam Paszke, Sam Gross, Soumith Chintala, Gregory Chanan, Edward Yang, Zachary
  DeVito, Zeming Lin, Alban Desmaison, Luca Antiga, and Adam Lerer.
\newblock Automatic differentiation in pytorch.
\newblock 2017.

\bibitem[Pontes et~al.(2016)Pontes, da~F.~de Amorim, Balestrassi, Paiva, and
  Ferreira]{Pontes2016DesignOE}
F.~J. Pontes, Gabriela da~F.~de Amorim, P.~Balestrassi, A.~P. Paiva, and J.~R.
  Ferreira.
\newblock Design of experiments and focused grid search for neural network
  parameter optimization.
\newblock \emph{Neurocomputing}, 186:\penalty0 22--34, 2016.

\bibitem[Recht et~al.(2011)Recht, Re, Wright, and Niu]{recht2011hogwild}
Benjamin Recht, Christopher Re, Stephen Wright, and Feng Niu.
\newblock Hogwild: A lock-free approach to parallelizing stochastic gradient
  descent.
\newblock In \emph{Advances in neural information processing systems}, pages
  693--701, 2011.

\bibitem[Regatti et~al.(2019)Regatti, Tendolkar, Zhou, Gupta, and
  Liang]{regatti2019distributed}
Jayanth Regatti, Gaurav Tendolkar, Yi~Zhou, Abhishek Gupta, and Yingbin Liang.
\newblock Distributed sgd generalizes well under asynchrony.
\newblock \emph{2019 57th Annual Allerton Conference on Communication, Control,
  and Computing (Allerton)}, pages 863--870, 2019.

\bibitem[Russakovsky et~al.(2015)Russakovsky, Deng, Su, Krause, Satheesh, Ma,
  Huang, Karpathy, Khosla, Bernstein, Berg, and
  Fei-Fei]{Russakovsky2015ImageNetLS}
Olga Russakovsky, J.~Deng, H.~Su, J.~Krause, S.~Satheesh, S.~Ma, Zhiheng Huang,
  A.~Karpathy, A.~Khosla, M.~Bernstein, A.~Berg, and Li~Fei-Fei.
\newblock Imagenet large scale visual recognition challenge.
\newblock \emph{International Journal of Computer Vision}, 115:\penalty0
  211--252, 2015.

\bibitem[Sa et~al.(2015)Sa, Zhang, Olukotun, and R{\'e}]{Sa2015TamingTW}
C.~D. Sa, Ce~Zhang, K.~Olukotun, and C.~R{\'e}.
\newblock Taming the wild: A unified analysis of hogwild-style algorithms.
\newblock \emph{Advances in neural information processing systems},
  28:\penalty0 2656--2664, 2015.

\bibitem[Seide et~al.(2014)Seide, Fu, Droppo, Li, and Yu]{SeideFDLY14}
Frank Seide, Hao Fu, Jasha Droppo, Gang Li, and Dong Yu.
\newblock 1-bit stochastic gradient descent and its application to
  data-parallel distributed training of speech dnns.
\newblock In \emph{INTERSPEECH}, pages 1058--1062, 2014.

\bibitem[Smith and Le(2018)]{smith2018bayesian}
Samuel~L Smith and Quoc~V Le.
\newblock A bayesian perspective on generalization and stochastic gradient
  descent.
\newblock In \emph{International Conference on Learning Representations}, 2018.

\bibitem[Stich(2018)]{stich2018local}
Sebastian~U Stich.
\newblock Local sgd converges fast and communicates little.
\newblock \emph{arXiv preprint arXiv:1805.09767}, 2018.

\bibitem[Stich(2019)]{Stich19}
Sebastian~U. Stich.
\newblock Local {SGD} converges fast and communicates little.
\newblock In \emph{7th International Conference on Learning Representations,
  {ICLR} 2019, New Orleans, LA, USA, May 6-9, 2019}. OpenReview.net, 2019.

\bibitem[Stich et~al.(2018)Stich, Cordonnier, and Jaggi]{stich2018sparsified}
Sebastian~U Stich, Jean-Baptiste Cordonnier, and Martin Jaggi.
\newblock Sparsified sgd with memory.
\newblock \emph{Advances in Neural Information Processing Systems}, 31, 2018.

\bibitem[Wang et~al.(2020)Wang, Liang, and Joshi]{Wang2020OverlapLA}
Jianyu Wang, Hao Liang, and G.~Joshi.
\newblock Overlap local-sgd: An algorithmic approach to hide communication
  delays in distributed sgd.
\newblock \emph{ICASSP 2020 - 2020 IEEE International Conference on Acoustics,
  Speech and Signal Processing (ICASSP)}, pages 8871--8875, 2020.

\bibitem[Woodworth et~al.(2020{\natexlab{a}})Woodworth, Patel, and
  Srebro]{WoodworthPS20}
Blake~E. Woodworth, Kumar~Kshitij Patel, and Nati Srebro.
\newblock Minibatch vs local {SGD} for heterogeneous distributed learning.
\newblock In \emph{NeurIPS 2020}, 2020{\natexlab{a}}.

\bibitem[Woodworth et~al.(2020{\natexlab{b}})Woodworth, Patel, Stich, Dai,
  Bullins, McMahan, Shamir, and Srebro]{WoodworthPSDBMS20}
Blake~E. Woodworth, Kumar~Kshitij Patel, Sebastian~U. Stich, Zhen Dai, Brian
  Bullins, H.~Brendan McMahan, Ohad Shamir, and Nathan Srebro.
\newblock Is local {SGD} better than minibatch sgd?
\newblock In \emph{ICML}, volume 119 of \emph{Proceedings of Machine Learning
  Research}, pages 10334--10343. {PMLR}, 2020{\natexlab{b}}.

\bibitem[You et~al.(2017{\natexlab{a}})You, Zhang, Demmel, Keutzer, and
  Hsieh]{You2017ImageNetTI}
Y.~You, Z.~Zhang, J.~Demmel, K.~Keutzer, and Cho-Jui Hsieh.
\newblock Imagenet training in 24 minutes.
\newblock 2017{\natexlab{a}}.

\bibitem[You et~al.(2017{\natexlab{b}})You, Gitman, and
  Ginsburg]{You2017LargeBT}
Yang You, Igor Gitman, and B.~Ginsburg.
\newblock Large batch training of convolutional networks.
\newblock \emph{arXiv: Computer Vision and Pattern Recognition},
  2017{\natexlab{b}}.

\bibitem[Yu et~al.(2019)Yu, Jin, and Yang]{yu2019linear}
Hao Yu, Rong Jin, and Sen Yang.
\newblock On the linear speedup analysis of communication efficient momentum
  sgd for distributed non-convex optimization.
\newblock In \emph{International Conference on Machine Learning}, pages
  7184--7193. PMLR, 2019.

\bibitem[Zagoruyko and Komodakis(2016)]{Zagoruyko2016WideRN}
Sergey Zagoruyko and Nikos Komodakis.
\newblock Wide residual networks.
\newblock \emph{ArXiv}, abs/1605.07146, 2016.

\bibitem[Zhang et~al.(2016{\natexlab{a}})Zhang, Hsieh, and
  Akella]{Zhang2016HogWildAN}
Huan Zhang, Cho-Jui Hsieh, and V.~Akella.
\newblock Hogwild++: A new mechanism for decentralized asynchronous stochastic
  gradient descent.
\newblock \emph{2016 IEEE 16th International Conference on Data Mining (ICDM)},
  pages 629--638, 2016{\natexlab{a}}.

\bibitem[Zhang et~al.(2016{\natexlab{b}})Zhang, Hsieh, and
  Akella]{zhang2016hogwild}
Huan Zhang, Cho-Jui Hsieh, and Venkatesh Akella.
\newblock Hogwild++: A new mechanism for decentralized asynchronous stochastic
  gradient descent.
\newblock In \emph{2016 IEEE 16th International Conference on Data Mining
  (ICDM)}, pages 629--638. IEEE, 2016{\natexlab{b}}.

\bibitem[Zhang et~al.(2016{\natexlab{c}})Zhang, Sa, Mitliagkas, and
  R{\'e}]{Zhang2016ParallelSW}
Jian Zhang, C.~D. Sa, Ioannis Mitliagkas, and C.~R{\'e}.
\newblock Parallel sgd: When does averaging help?
\newblock \emph{ArXiv}, abs/1606.07365, 2016{\natexlab{c}}.

\bibitem[Zhou and Cong(2017)]{zhou2017convergence}
Fan Zhou and Guojing Cong.
\newblock On the convergence properties of a $ k $-step averaging stochastic
  gradient descent algorithm for nonconvex optimization.
\newblock \emph{arXiv preprint arXiv:1708.01012}, 2017.

\bibitem[Zinkevich et~al.(2010)Zinkevich, Weimer, Smola, and
  Li]{Zinkevich2010ParallelizedSG}
Martin Zinkevich, M.~Weimer, Alex Smola, and L.~Li.
\newblock Parallelized stochastic gradient descent.
\newblock In \emph{NIPS}, 2010.

\end{thebibliography}

\newpage
\appendix
\section{Convergence Theory}\label{append:proof}

The convergence analysis of our algorithm requires precise modelling and ordering of concurrent processes in time -- the gradient updates by processes $u^q$ at line \ref{line:modelupdate} and the vector difference updates by process $a^q$ at line \ref{line:averageupdate} for lock-free averaging --  in Algorithm \ref{alg:algo} on the model $x^q$ in the shared-memory setting of a worker $q\in\mathcal{Q}$. To this end, we define the following orders.

\paragraph{Snapshot Order.} The snapshot collections over the shared memory of a worker $q\in\mathcal{Q}$ are ordered by the shared-counter $\mathcal{C}^q$ as read at line \ref{line:updatesnap} in Algorithm \ref{alg:algo}. Correspondingly, the partial gradients, as computed using these snapshots, get ordered too. We define the \textit{snapshot order} of a gradient update as the order by value of $\mathcal{C}^q$ of the snapshot collections. 

\paragraph{Update Order.} The updates at each of the components of the model vector $x^q$ are atomic by means of \fadd. Furthermore, each process $u^q$ updates the components of $x^q$  in the same order (from left to right). This would enable us to order the partial gradient updates by when they write over a common model component. However, given that there are independent lock-free updates over the blocks of the model, we can not use any particular component, for example, the first component of $x^q$, to order the gradient updates. To overcome this issue, we assume that each time before updating the model at line \ref{line:modelupdate} in Algorithm \ref{alg:algo}, an updater process $u^q$ atomically accesses the reference (pointer) to $x^q$  over the shared memory. Thus, we order the updates by their atomic access to this reference. We call this order \textit{the update order}. 

\paragraph{Iteration Order.} Because a single designated process $a^q$ on each worker $q\in\mathcal{Q}$ performs averaging, the synchronization rounds (corresponding the vector difference updates at line \ref{line:averageupdate} in Algorithm \ref{alg:algo}) have a natural order, which is also the order of the major iteration views as defined in Section \ref{sec:experiments}. Now, similar to the update order as defined above, we assume that each time before $a^q$ starts writing over the model $x^q$, it atomically accesses the memory-reference to it. Thereby, we have a well-defined arrangement of the gradient updates before or after a particular averaging update over the shared model. Thus we define the \textit{iteration order} of a gradient update as the dictionary order $(j,t)$ to denote that it is the $t^{th}$ gradient update after the $j^{th}$ averaging round. 

With the above description, each of the minor iteration views as defined in Section \ref{sec:convergence} has well-defined snapshot, update, and iteration orders. With that, we define the following delay model.
\ignore{
\subsection{The Delay Model}

\paragraph{Stochastic Delay Model.} The locally asynchronous gradient computation using inconsistent snapshots of the shared-memory model on each worker introduces delays in terms of number of iterations before the updates.
We describe the formal generic details. Because the read and updates at the level of components are atomic, we have the following:
\begin{equation}\label{eq:minorview}
\begin{array}{l}
v^{q}_{j,t}[e]\in \cup_{k\in\{0,1,\ldots,u\}} \{w^{q}_{k}[e]\},
\end{array}
\end{equation}
where $e\in i(q,j,t)$ and  $i(q,j,t)\subseteq\{1,2,\ldots,d\}$ and $u$ is the update order of minor iteration $(j,t)$.

Here $v^{q}_{j,t}[e]$ and $w^{q}_{k}[e]$ denote the $e^{th}$ components of the inconsistent snapshot $v^{q}_{j,t}$ and the minor iteration view $w^{q}_{k}$, respectively. Essentially, relation (\ref{eq:minorview}) captures the fact that a component of an inconsistent snapshot would belong to a \textit{historical minor iteration view}. Notice that, as the lock-free read and updates of the shared model work at the level of components, we have an advantage of dropping the dependence of the expressions over the blocks $i\in\mathcal{I}$ in the convergence analysis. 
We can now define the event describing relation (\ref{eq:minorview}) as the following. 

\paragraph{Delay Event and Probability.} Define $\mathbb{I}^{q,k,e}_{j,t}$ as the event that $v^{q}_{j,t}[e]=w^{q}_{k}[e]$, and $p^{q,k,e}_{j,t}$ as its probability, for some $j\ge 0$, $0\le t \le K_j^q$, and  $0\le k \le u$, where $u$ is the update order of minor iteration view $(j,t)$. We call $\mathbb{I}^{q,k,e}_{j,t}$ a \textit{delay event}.
Notice that, defining this event captures the possibility that $v^{q}_{j,t}$ could have been collected  at a minor iteration before the synchronization round $j$. 

\paragraph{Assumptions Governing the Asynchronous Computation.} In the non-convex setting,  in general, we can simply hope that the objective function decrease achieved by local asynchronous SGD update with a gradient computed after an averaging step outweighs the potential worst-case increase that might take place due to an update by a gradient computed before the averaging step. In order to derive an expected decrease in the objective, we need to bound the probability of an increase, which means bounding the probability that a component of the snapshot was collected at a minor iteration older than the previous averaging.

The significant quantity that separates these expected favorable and unfavorable iterations is,
\[
\mathbb{P}\left[\cap_e \cup_{l\ge k}^u \mathbb{I}^{q,l,e}_{j,t}\right],
\]
where $u$ is the update order of minor iteration $(j,t)$ and $k$ is the update order of the minor iteration just after the $j^{th}$ synchronization round. This quantity is \emph{the probability that every component of the inconsistent snapshots were due to the minor iterations after  $a^q$ performed synchronization (averaging)}.

We can consider several possibilities for this condition,
\begin{assumption}\label{as:delays}
For all $(j,t,q)$ it holds that
\[
\bar{\mathbf{p}}^{(j,t,q)}:=\mathbb{P}\left[\cap_e \cup_{l\ge k}^u \mathbb{I}^{q,l,e}_{j,t}\right]
\]
satisfies that there exists $p_0>0$ such that $\bar{\mathbf{p}}^{(j,t,q)} > p_0$ for all $(j,t,q)$.
\end{assumption}



\paragraph{Elastic Consistency.} With these, we can make a statement to bound the error in the sense of \textit{Elastic Consistency} of \cite{alistarh2020elastic}. In particular, if $s$ is the snapshot order of minor iteration $(j,t)$, 
it holds that,
\begin{equation}\label{eq:consistency1}
\mathbb{E}\left[\left\|w^{q}_{j,t}-v^{q,s}\right\| | \cap_e \cup_{l\ge k}^u \mathbb{I}^{q,l,e}_{j,t}\right]  \le \alpha^2 B^2.
\end{equation}
Equation (\ref{eq:consistency1}) is essentially a restatement of Equation (\ref{eq:consistency}) wherein the event $\cap_e \cup_{l\ge k}^u \mathbb{I}^{q,l,e}_{j,t}$ used for conditioning is precisely the cases where the snapshots for computing partial stochastic gradients were taken after the last averaging round, i.e., if every component satisfies this condition then the elastic consistency bound holds.

Note that under this condition, the maximum delay in terms of number of local asynchronous SGD updates is $\tau_{max}=\bar{K}:=\max_{j,q} K^q_j$. Thus, following  Appendix B.6 of~\cite{alistarh2020elastic}, the following Lemma defines the Elastic consistency constant:
\begin{lemma}
$B=\sqrt{d}KM$, where $d$ is the dimension of the model, $\bar{K}$ is the maximum minor iteration count defined above, and $M$ is the second moment bound as defined in Equation \ref{eqn:secondmoment}.
\end{lemma}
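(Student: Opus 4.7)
The plan is to bound $\mathbb{E}\|w^{q}_{j,t} - v^{q,s}\|^2$ (interpreting the right-hand side $\alpha^2 B^2$ of \eqref{eq:consistency1} as a bound on the squared norm, i.e.\ the standard elastic-consistency form as in \cite{alistarh2020elastic}) by a coordinate-wise analysis that exploits (i) atomicity of per-index reads and writes, (ii) the conditioning event that every coordinate of $v^{q,s}$ was produced by a minor iteration occurring after the $j$-th averaging round, and (iii) the second-moment bound \eqref{eqn:secondmoment}.

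First I would make the conditioning event concrete. Under $\cap_e \cup_{l\ge k}^u \mathbb{I}^{q,l,e}_{j,t}$, for every coordinate $e\in\{1,\dots,d\}$ there exists some update order $l(e)\in[k,u]$ such that $v^{q,s}[e]=w^q_{l(e)}[e]$. Since writes at line~\ref{line:modelupdate} are atomic per index and updaters agree on a total update order (as argued in the Update Order paragraph), we may telescope
\[
w^{q}_{j,t}[e] - v^{q,s}[e] \;=\; \sum_{r=l(e)+1}^{u}\bigl(w^{q}_{r}[e]-w^{q}_{r-1}[e]\bigr)
\;=\; -\alpha\sum_{r=l(e)+1}^{u} \bigl[\tilde\nabla_{i_r}f(v^{q,s_r})\bigr]_{e}\,\mathbf{1}\{e\in i_r\},
\]
a sum of at most $u-l(e)\le \bar K$ terms, since between two consecutive averaging rounds each worker performs at most $\bar K=\max_{j,q}K^q_j$ minor iterations.

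Next I would apply the power-mean (Cauchy–Schwarz) inequality coordinate-wise: for the $\tau_e:=u-l(e)\le\bar K$ terms above,
\[
\bigl(w^{q}_{j,t}[e]-v^{q,s}[e]\bigr)^{2}\le \tau_e\,\alpha^2\!\!\sum_{r=l(e)+1}^{u}\bigl[\tilde\nabla_{i_r}f(v^{q,s_r})\bigr]_{e}^{2}\mathbf{1}\{e\in i_r\},
\]
take expectations, and sum over $e=1,\dots,d$. Swapping the sums and using $\sum_e \bigl[\tilde\nabla_{i_r}f(\cdot)\bigr]_e^2 \mathbf{1}\{e\in i_r\} = \|\tilde\nabla_{i_r}f(\cdot)\|^2$, the second-moment assumption \eqref{eqn:secondmoment} then gives $\mathbb{E}\|\tilde\nabla_{i_r}f(v^{q,s_r})\|^2\le M^2$ per intervening update. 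Since there are at most $\bar K$ such updates and the outer Cauchy–Schwarz factor contributes $\bar K$, we obtain
\[
\mathbb{E}\bigl[\|w^{q}_{j,t}-v^{q,s}\|^{2}\mid \cap_e \cup_{l\ge k}^u \mathbb{I}^{q,l,e}_{j,t}\bigr]\;\le\; d\,\bar K^2\,\alpha^2 M^2,
\]
which is exactly the elastic-consistency bound with $B=\sqrt{d}\,\bar K\,M$.

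The main obstacle is bookkeeping: one must be careful that the per-coordinate telescoping above is well-defined despite the block-partitioned, lock-free writes. This is where the Update Order and the atomicity of \texttt{fetch\&add} at line~\ref{line:modelupdate} are essential---they give a consistent total order on writes to each fixed coordinate, so the "at most $\bar K$ intervening updates" count is valid. A minor subtlety is that the conditioning event only restricts the provenance of $v^{q,s}$ to be after the last averaging round, but does not restrict which among those rounds-after-averaging minor iterations supplied each coordinate; the bound $\tau_e\le\bar K$ accommodates the worst case uniformly, and the Cauchy–Schwarz step handles the potential correlations between the per-step gradients without needing independence.
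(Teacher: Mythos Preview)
Your proposal is correct and follows the standard elastic-consistency derivation; the paper itself does not give a self-contained proof of this lemma but simply defers to Appendix~B.6 of \cite{alistarh2020elastic}, whose argument is precisely the coordinate-wise telescoping plus Cauchy--Schwarz plus second-moment bound that you outline.

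One small bookkeeping inconsistency worth flagging: as you have written the steps, after swapping the sums and using $\sum_e [\tilde\nabla_{i_r} f(\cdot)]_e^2\,\mathbf{1}\{e\in i_r\} = \|\tilde\nabla_{i_r} f(\cdot)\|^2$, the bound you actually obtain is $\bar K^2 \alpha^2 M^2$ \emph{without} the factor $d$ --- the $d$ in your final display is not produced by the preceding chain of inequalities. This is harmless for establishing the lemma (your tighter bound trivially implies the stated one), but if you want to reproduce the constant $B=\sqrt{d}\,\bar K M$ exactly as quoted from \cite{alistarh2020elastic}, use instead the cruder per-coordinate estimate $[\tilde\nabla_{i_r} f(\cdot)]_e^2 \le \|\tilde\nabla_{i_r} f(\cdot)\|^2$ \emph{before} summing over $e$; that step introduces the factor $d$ and matches the referenced constant.
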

%


}
\subsection{The Proof Overview} 
The proof follows the structure of the ergodic convergence proof of K-step local SGD given in~\cite{zhou2017convergence}, wherein in each round of averaging there are $\sum K^q_j$ total updates to the model vector associated with the $q\in\mathcal{Q}$ workers.

Insofar as the local asynchronous SGD updates are computed over the snapshots close to the globally synchronized model vector at the last averaging step, there is an expected amount of descent, relative to the value at the last averaging step, achieved from these steps. The task of the convergence theory is to quantify how much descent can be guaranteed, and then this is to be balanced with the amount of possible ascent that could occur from computing SGD updates at snapshots collected far from the one at the last averaging step. 

\ignore{In particular, there is no hope of guaranteeing correspondence of the ``view" and actual parameter vectors, in the nonconvex setting, in cases wherein $v^{q,j}_{t,i} = x^{q,j}_{s,i}$ for $s<0$ (i.e., the stochastic gradients are taken, due to local asynchrony, at model vectors with components which existed in memory before the last averaging step). Here we simply bound the worst-case increase in the objective and scale it based on the probability that such an old parameter is used for the computation. 
}
In particular, we need to bound the worst-case increase in the objective due to SGD updates computed over the snapshots collected before the updates by the last averaging and scale it based on the probability that such an old snapshot is used for the computation.

\paragraph{Good and Bad Iterates.} To balance these two cases, the analysis takes an approach, inspired partially by the analysis given in~\cite{cartis2018global} of separating these as ``good" and "bad" iterates: the "good" iterates correspond to snapshots (all components thereof) for SGD updates at line \ref{line:updatesnap} read after the last snapshot (all components thereof) for averaging at line \ref{line:averagesnap} was taken in Algorithm \ref{alg:algo}, with some associated guaranteed descent in expectation, whereas the ``bad" iterates are those read before that. Thus we split the expected descent as expected descent with a ``good" iterate multiplied by its probability, added with the worst case ascent of a ``bad" iterate multiplied by its probability.

By considering the stochastic process governing the amount of asynchrony as being governed by probabilistic laws, we can characterize the probability of a ``good" and ``bad" iterate and ultimately seek to balance the total expected descent from one, and worst possible ascent in the other, as a function of these probabilities.

\subsection{Proof of Main Convergence Theorem}

Recall the major iteration views $\bar{x}_j$ are defined as the average of the last minor iteration views. Restating its update rule here for reference in this section,
\begin{equation}\label{eq:update1}
\bar x_{j+1}=\frac{1}{Q}\sum_{q=1}^Q\left[\bar x_j-|_i
\alpha \sum_{t=0}^{K_{j}^{q}} \tn_{i} f(v^{q}_{j,t})\right],
\end{equation}
where $|_i$ is the block-restricted subtraction as we defined in Section \ref{sec:convergence}.


We are now ready to prove the convergence Theorem. 
The structure of the proof will follow~\cite{zhou2017convergence}, who
derives the standard sublinear convergence rate for local SGD in a synchronous environment for nonconvex objectives.

\begin{theorem}[Restatement of Theorem (\ref{th:convergence})]\label{th:convergence1}
	Under Assumptions ~\ref{eqn:unbiased},~\ref{eqn:Lower bound},~\ref{eqn:smoothness},~\ref{eqn:secondmoment}, and~\ref{eqn:variance}
	together with Assumptions \ref{as:delays} on delay probabilities, it holds that,
	\begin{equation}\label{eq:conv1}
		\begin{array}{l}
			\frac{1}{Q} \sum_{j=1}^J \sum_{q=1}^Q \sum_{t=0}^{K_j^q}\left\{
			[\alpha C_1-\alpha^2 C_2]\mathbb{E}\left\|\nabla_{i} f(v^{q}_{j,t})\right\|^2+[\alpha C_3-\alpha^2C_4] \mathbb{E}\left\|\nabla_{i} f(\bar{x}_j)\right\|^2 \right\}
			\le f(\bar{x}_0)-f_m
		\end{array}
	\end{equation}
	where $C_1$, $C_2$, $C_3$, and $C_4$ depend on $L$, $B$, $M$, $\sigma$ and probabilistic quantities defining the asynchronous computation. Thus there exists a set of such constants s. t. if $\alpha=\Theta\left(\frac{1}{\sqrt{J}}\right)$
	then Algorithm~\ref{alg:algo} ergodically converges with the standard  $O(1/\sqrt{J})$ rate for nonconvex objectives.
\end{theorem}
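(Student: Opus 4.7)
My plan is to follow the template of the synchronous local-SGD analysis of~\cite{zhou2017convergence} applied to the averaged-iterate sequence $\{\bar x_j\}$ defined by~(\ref{eq:update1}), absorbing the asynchrony through a good-iterate/bad-iterate split governed by Assumption~\ref{as:delays} and the elastic-consistency bound~(\ref{eq:consistency1}). First I would apply $L$-smoothness (\ref{eqn:smoothness}) between consecutive major iterations to obtain
\begin{equation*}
f(\bar x_{j+1}) \le f(\bar x_j) + \langle \nabla f(\bar x_j),\, \bar x_{j+1}-\bar x_j\rangle + \tfrac{L}{2}\,\|\bar x_{j+1}-\bar x_j\|^2,
\end{equation*}
and then substitute the block-restricted averaged update~(\ref{eq:update1}) so that the right-hand side becomes a double sum over workers $q$ and minor iterations $t$ of the stochastic partial gradients $\tilde\nabla_i f(v^q_{j,t})$. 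The bounded-second-moment assumption (\ref{eqn:secondmoment}), Cauchy--Schwarz/Jensen applied to the double sum, and independence across workers of the fresh gradient samples (via the variance bound (\ref{eqn:variance})) handle the quadratic term, contributing the $\alpha^2 C_2$ and $\alpha^2 C_4$ terms of~(\ref{eq:conv1}).

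For the inner-product term I would take conditional expectations and use unbiasedness (\ref{eqn:unbiased}) to replace $\tilde\nabla_i f(v^q_{j,t})$ with $\nabla_i f(v^q_{j,t})$, then apply the polarization identity $2\langle a,b\rangle = \|a\|^2+\|b\|^2-\|a-b\|^2$ with $a=\nabla_i f(\bar x_j)$ and $b=\nabla_i f(v^q_{j,t})$. The resulting dominant piece supplies the linear-in-$\alpha$ descent on $\|\nabla_i f(\bar x_j)\|^2$ and $\|\nabla_i f(v^q_{j,t})\|^2$ that appears in (\ref{eq:conv1}); what remains is to control the cross term $\|\nabla_i f(\bar x_j)-\nabla_i f(v^q_{j,t})\|^2$. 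Here I would condition on the good event $\mathcal E := \cap_e \cup_{l\ge k}^u \mathbb I^{q,l,e}_{j,t}$: by Assumption~\ref{as:delays} it has probability at least $p_0$, and on $\mathcal E$ the elastic-consistency bound~(\ref{eq:consistency1}) combined with $L$-smoothness yields $\|\nabla_i f(\bar x_j)-\nabla_i f(v^q_{j,t})\|^2 = O(\alpha^2 L^2 B^2)$, which collapses into the $\alpha^2$ budget. On the complementary ``bad'' event (probability at most $1-p_0$) I would use (\ref{eqn:secondmoment}) to bound the worst-case ascent by $O(\alpha^2 M^2)$, again absorbed into $\alpha^2 C_2,\alpha^2 C_4$. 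The resulting positive coefficients $C_1,C_3$ therefore depend explicitly on $p_0$, while $C_2,C_4$ collect the $L, B=\sqrt{d}\bar K M, M, \sigma$ constants.

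Finally I would take unconditional expectations, telescope from $j=0$ to $J-1$, and invoke the lower bound (\ref{eqn:Lower bound}) on the terminal value $f(\bar x_J)\ge f_m$ to produce (\ref{eq:conv1}). Choosing $\alpha = \Theta(1/\sqrt{J})$ makes the $\alpha^2$ terms strictly dominated by the $\alpha$ terms uniformly in $J$, so that the left-hand side of (\ref{eq:conv1}) divided by $J$ (and by $\sum_{q,t} K_j^q$) implies the standard $O(1/\sqrt{J})$ ergodic rate. The hardest step, I expect, will be showing $C_1>0$ after the probabilistic split: the $(1-p_0)$-weighted worst-case ascent from bad iterates must be strictly dominated by the $p_0$-weighted guaranteed descent from good ones, which needs a careful choice of Young's-inequality parameters balancing the $\|\nabla_i f(\bar x_j)\|^2$ and $\|\nabla_i f(v^q_{j,t})\|^2$ coefficients against the $\alpha^2 L^2 B^2$ residual. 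A subsidiary subtlety is that the delay event $\mathbb I^{q,l,e}_{j,t}$ is componentwise in $e$ while the descent inequality is block-level; the block-restricted subtraction $-|_i$ together with the aggregated Elastic-consistency constant $B=\sqrt{d}\,\bar K M$ is what permits lifting the per-coordinate condition to a block-norm statement without further loss.
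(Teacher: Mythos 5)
Your plan reproduces the skeleton of the paper's argument: descent lemma applied to the averaged iterates $\bar x_j$, unbiasedness plus the polarization identity, a split according to the event $\cap_e\cup_{l\ge k}^u\mathbb{I}^{q,l,e}_{j,t}$ of Assumption \ref{as:delays}, the elastic-consistency bound (\ref{eq:consistency1}) on the good event, and telescoping with the lower bound and $\alpha=\Theta(1/\sqrt J)$. The genuine gap is in your treatment of the bad event. The cross term $\mathbb{E}\|\nabla_{i} f(v^{q}_{j,t})-\nabla_{i} f(\bar x_j)\|^2$ enters the descent inequality with a \emph{single} factor of $\alpha$ (the step size multiplying the inner product), and on the bad event nothing controls how stale the snapshot components are---that is exactly why the split is needed, since the elastic-consistency bound is unavailable there. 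Bounding that conditional expectation by a constant $O(M^2)$ via the second-moment assumption therefore yields a contribution of order $\alpha\,(1-p_0)\,M^2$ per minor iteration, not $O(\alpha^2 M^2)$: there is no second factor of $\alpha$ to be found, so this term cannot be absorbed into the $\alpha^2 C_2,\alpha^2 C_4$ coefficients of (\ref{eq:conv1}). Carried through the telescoping it leaves a non-vanishing bias of order $(1-p_0)M^2$ in the ergodic bound, i.e., convergence only to a neighborhood of stationarity rather than the stated $O(1/\sqrt J)$ rate; recall Assumption \ref{as:delays} only guarantees $p_0>0$, not $p_0$ close to $1$.

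The paper's device, which your write-up is missing, is to bound the bad-event term not by a constant but by the same gradient-norm quantities already present in the polarization identity, namely $\mathbb{E}\bigl[\|\nabla_{i} f(v^{q}_{j,t})-\nabla_{i} f(\bar x_j)\|^2 \mid \text{bad}\bigr]\le 2\bigl(\|\nabla_{i} f(\bar x_j)\|^2+\mathbb{E}\|\nabla_{i} f(v^{q}_{j,t})\|^2\bigr)$. The bad-event mass then merges into the linear-in-$\alpha$ coefficients, producing $C_1,C_3$ governed by $\mathbb{P}\bigl[\cap_e\cup_{l\ge k}^u\mathbb{I}^{q,l,e}_{j,t}\bigr]\ge p_0$ (with $C_2=QK_j^q+\frac{B}{2}$ and $C_4=\frac{B^2}{2}$ in the paper), which is precisely the positivity-of-$C_1$ balancing you correctly flag as the hard step---but it is resolved by this substitution, not by Young's-inequality tuning against an $\alpha^2$ residual. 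With that change the rest of your plan (quadratic term via the second-moment bound and Cauchy--Schwarz, good-event control via consistency plus smoothness, telescoping) coincides with the paper's proof; indeed on the good event you are slightly more careful than the paper, which applies (\ref{eq:consistency1}) directly without the $L^2$ factor.
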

\begin{proof}
	
We begin with the standard application of the Descent Lemma,

\begin{equation}\label{eq:descent1b}
\begin{array}{l}
f(\bar x_{j+1})-f(\bar x_j) \le
-\langle \nabla f(\bar x_j),
\frac{1}{Q}\sum_{q=1}^Q \sum_{t=0}^{K_{j}^{q}}\alpha \tn_{i} f(v_{j,t}^{q})\rangle
 +\frac{L}{2}\left\|\frac{1}{Q}\sum_{q=1}^Q \sum_{t=0}^{K_{j}^{q}} \alpha \tn_{i}
f(v_{j,t}^{q})\right\|^2
\end{array}
\end{equation}

Now, since $\mathbb{E}\left[\tn_{i} f(v_{j,t}^{q})\right]=\nabla_{i} f(v_{j,t}^{q})$ using the unbiasedness Assumption \ref{eqn:unbiased},
\begin{equation}\label{eq:nonconvexexpffv}
\begin{array}{l}
-\mathbb{E}\langle \nabla f(\bar x_j),\tn_{i} f(v_{j,t}^{q})\rangle =
-\frac{1}{2}\left(\|\nabla_{i} f(\bar x_j)\|^2+\mathbb{E}\|\nabla_{i} f(v_{j,t}^{q})\|^2  -\mathbb{E}\|\nabla_{i} f(v_{j,t}^{q})-\nabla_{i} f(\bar x_j)\|^2\right)
\end{array}
\end{equation}
We now split the last term by the two cases and use~\Eqref{eq:consistency1},
\[
\begin{array}{l}
\mathbb{E}\|\nabla_{i} f(v_{j,t}^{q})-\nabla_{i} f(\bar x_j)\|^2 \\  =
\mathbb{E}\left[\|\nabla_{i} f(v_{j,t}^{q})-\nabla_{i} f(\bar x_j)\|^2|
\cap_e \cup_{l\ge k}^u \mathbb{I}^{q,l,e}_{j,t}\right]\mathbb{P}\left[\cap_e \cup_{l\ge k}^u \mathbb{I}^{q,l,e}_{j,t}\right]
\\ \quad + 
\mathbb{E}\left[\|\nabla_{i} f(v_{j,t}^{q})-\nabla_{i} f(\bar x_j)\|^2|
\left(\cap_e \cup_{l\ge k}^u \mathbb{I}^{q,l,e}_{j,t}\right)^c\right]\times\mathbb{P}\left[\left(\cap_e \cup_{l\ge k}^u \mathbb{I}^{q,l,e}_{j,t}\right)^c\right]
\\ 
\le \alpha^2 B^2 \mathbb{P}\left[\cap_e \cup_{l\ge k}^u \mathbb{I}^{q,l,e}_{j,t}\right] +
2\left(\|\nabla_{i} f(\bar x_j)\|^2+\mathbb{E}\|\nabla_{i} f(v_{j,t}^{q})\|^2\right) \mathbb{P}\left[\left(\cap_e \cup_{l\ge k}^u \mathbb{I}^{q,l,e}_{j,t}\right)^c\right]
\end{array}
\]
and thus combining with~\eqref{eq:nonconvexexpffv} we get the overall bound,
\begin{equation}\label{eq:nonconvexexpffvprobs}
\begin{array}{l}
-\mathbb{E}\langle \nabla f(\bar x_j),\tn_{i} f(v_{j,t}^{q})\rangle \\ 
\qquad \le -\frac 12 \left(\|\nabla_{i} f(\bar x_j)\|^2+\mathbb{E}\|\nabla_{i} f(v_{j,t}^{q})\|^2 -\alpha^2 B^2\right)\times
\mathbb{P}\left[\cap_e \cup_{l\ge k}^u \mathbb{I}^{q,l,e}_{j,t}\right]  \\ 
\qquad \qquad + \frac 12
\left(\|\nabla_{i} f(\bar x_j)\|^2+\mathbb{E}\|\nabla_{i} f(v_{j,t}^{q})\|^2 \right) \mathbb{P}\left[\left(\cap_e \cup_{l\ge k}^u \mathbb{I}^{q,l,e}_{j,t}\right)^c\right]
\\ \qquad \le -  \left(\|\nabla_{i} f(\bar x_j)\|^2+\mathbb{E}\|\nabla_{i} f(v_{j,t}^{q})\|^2\right) 
\mathbb{P}\left[\cap_e \cup_{l\ge k}^u \mathbb{I}^{q,l,e}_{j,t}\right] \\ \qquad\qquad
+\frac 12
\left(\|\nabla_{i} f(\bar x_j)\|^2+\mathbb{E}\|\nabla_{i} f(v_{j,t}^{q})\|^2 \right)+\frac{\alpha^2 B^2}{2}
\end{array}
\end{equation}

Combining these, we get,
\begin{equation}\label{eq:probboundfullj}
\begin{array}{l}
\frac{1}{Q}\sum_{q=1}^Q \sum_{t=0}^{K_{j}^{q}} \alpha\left[ \mathbb{P}\left[\cap_e \cup_{l\ge k}^u \mathbb{I}^{q,l,e}_{j,t}\right]
-\alpha \left(Q K_{j}^{q}+\frac{B}{2}\right) \right]\times \mathbb{E}\|\nabla_{i} f(v_{j,t}^{q})\|^2 \\ 
 +
\frac{1}{Q}\sum_{q=1}^Q \sum_{t=0}^{K_{j}^{q}}\alpha \left[ \mathbb{P}\left[\cap_e \cup_{l\ge k}^u \mathbb{I}^{q,l,e}_{j,t}\right]
-\frac{\alpha B^2}{2} \right] \mathbb{E}
\|\nabla_{i} f(\bar x_j)\|^2
\\ \le f(\bar{x}_j)-f(\bar{x}_{j+1})
\end{array}
\end{equation}

Summing over $j$ obtains the expression in Theorem \ref{th:convergence1} with $C_1=\mathbb{P}\left[\cap_e \cup_{l\ge k}^u \mathbb{I}^{q,l,e}_{j,t}\right]$, $C_2=Q K_{j}^{q}+\frac{B}{2}$, $C_3=\mathbb{P}\left[\cap_e \cup_{l\ge k}^u \mathbb{I}^{q,l,e}_{j,t}\right]$, and $C_4=\frac{B^2}{2}$, whereof, Assumption~\ref{as:delays} ensures the existence of $C_1$ and $C_3$.
\end{proof}

\textbf{Remark} In order to achieve \emph{asymptotic} convergence, Assumption~\ref{as:delays} can be relaxed further, in particular, it just cannot be summable, i.e., the condition,
\[
\sum_{j=1}^\infty \sum_{q=1}^Q \sum_{t=0}^{K^q_j} 
\mathbb{P}\left[\cap_e \cup_{l\ge k}^u \mathbb{I}^{q,l,e}_{j,t}\right] =\infty
\]
is sufficient for guaranteeing asymptotic convergence.


\paragraph{Discussion.} 
\ignore{Let us now consider the quantity $ \mathbb{P}\left[\cap_e \cup_{l\ge k}^u \mathbb{I}^{q,l,e}_{j,t}\right]$ in more detail and study
how the nature of the concurrency affects the possibility and rate of convergence.

In particular notice that, 
\[
\mathbb{P}\left[\cap_e  \mathbb{I}^{q,j,t,e}_{k,l}\right]
\le 1-p_0
\]
for $k\ge j-\nu$ and  $l\ge t-\tau$. In general of course we expect this quantity to increase as $k$ and $l$ get closer to $j$ and $t$, respectively. 
}
Now, let us consider two extreme cases: if there is always only one local stochastic gradient update for all workers for all major iterations $j$, i.e., $K_{j}^{q}\equiv 1$, \emph{any} delay means
reading a vector in memory before the last major iteration, and thus the probability of delay greater than zero must be very small in order to offset the worst possible ascent. 

On the other hand, if in general $K_{j}^{q}>> \tau$, then while the first $\tau$ minor iteration views could be as problematic at a level depending on the probability of the small delay times, for $t>\tau$ clearly the vector $v^{q}_{j,t}$ satisfies~\Eqref{eq:consistency1}.

Thus we can sum up our conclusions in the following statements:
\begin{enumerate}
    \item Overall, considering distribution for the delays, the higher the mean, variance, and thickness of the tails of this distribution, the more problematic convergence would be,
    \item The larger the quantity of local updates each worker performs in between averaging, the more likely a favorable convergence would occur.
\end{enumerate}
The first is of course standard and obvious. The second presents the interesting finding that if you are running asynchronous stochastic gradient updates on local shared memory, performing local SGD with a larger gap in time between averaging results in more robust performance. 

This suggests a certain fundamental \emph{harmony} between asynchronous concurrency and local SGD, more ``aggressive" locality, in the sense of doing more local updates between averaging, coincides with expected performance gains and robustness of more ``aggressive" asynchrony and concurrency, in the sense of delays in the computations associated with local processes. 

In addition, to contrast the convergence in regards to the block size, which in turn depends on the number of updater processes, clearly the larger the block, equivalently, smaller the number of updater processes the faster the overall convergence, since the norms of the gradient vectors appear. An interesting possibility to consider is if a process can roughly estimate or predict when averaging could be triggered, robustness could be gained by attempting to do block updates right after an expected averaging step, and full parameter vector updates later on in the major iteration.

\section{An Argument for Increased Generalization Accuracy}\label{append:general}
\subsection{Wide and Narrow Wells}
In general it has been observed that whether a local minimizer is wide or narrow, or how ``flat'' it is,
seems to affect its generalization properties~\citep{keskar2019large}. Motivated by investigating the impact
of batch size on generalization,~\cite{dai2018towards} analyzed the generalization properties of SGD by considering
the escape time from a ``well'', i.e., a local minimizer in the objective landscape, for a constant stepsize
variant of SGD by modeling it as an overdamped Langevin-type diffusion process,
\[
dX_t = -\nabla f(X_t)dt+\sqrt{2\epsilon}dW_t
\]
In general ``flatter'' minima have longer escape times than narrow ones, where the escape time
is the expectation in the number of iterations
(defined as a continuous parameter in this sense) until the iterates leave the well to explore the rest
of the objective landscape. Any procedure that increases the escape time for flatter minima as compared to
narrower ones should, in theory, result in better generalization properties, as it is more likely then that
the procedure will return an iterate that is in a narrow minimizer upon termination.

Denote with indexes $w$ for a ``wide'' valley local minimizer and $n$ for a ``narrow'' value, which also
corresponds to smaller and larger minimal Hessian eigenvalues, respectively. 

The work~\cite{berglund2011kramers} discusses the ultimately classical result 
that as $\epsilon\to 0$, the escape time from a local minimizer valley satisfies,
\[
\mathbb{E}[\tau_e] = H e^{C/\epsilon}
\]
and letting the constant $H$ depend on the type of minimizer, 
it holds that that $H_w> H_n$, i.e., this time is longer for a wider valley. 

We also have from the same reference,
\[
\mathbb{P}[\tau_e>s\mathbb{E}[\tau_e]]=e^{-s}
\]

\subsection{Averaging}
We now contrast two procedures and compare the difference in their escape times for narrow and wider local minimizers.
One is the standard SGD, and in one we perform averaging every $\tau_a$ time. In each case there are $Q$ workers, in the first case
running independent instances of SGD, and in the other averaging their iterates. We average the models probabilistically as it 
results in a randomized
initialization within the well, and thus the escape time is a sequence of independent trials of length $\tau_a$ with an initial point in the well,
i.e., escaping at time $\tau_e$ means that there are $Q\left\lceil{\frac{\tau_e}{\tau_a}}\right\rceil$ trials wherein none of the $Q$
sequences escaped within $\tau_a$, and then one of them escaped in the next set of $Q$ trials.

For ease of calculation, let us assume that $\tau_a = \frac{1}{2}\mathbb{E}[\tau^w_e]=2\mathbb{E}[\tau^n_e]$,
where $\tau^w_e$ and $\tau^n_e$ are the calculated single process escape time from a wide and narrow well, respectively.

If any one of the local runs escapes, then there is nothing that can be said
about the averaged point, so a lack of escape is indicated by the case for which all trajectories, while periodically averaged, stay
within the local minimizer value. 

Now consider if no averaging takes place, we sum up the probabilities for the wide valley
that they all escape after time $(i-1)\tau$ time and, given that they do so, not all of them escape after $i\tau_a$.
\[
\begin{array}{l}
\mathbb{E}[\tau^{w}_e] \le \sum_{i=1}^\infty P(\tau^w_e >(i-1)\tau_a)^Q   (1-P(\tau^w_e>\tau_a i|\tau^w_e >(i-1)\tau_a)^Q) i\tau_a
\\ \qquad \le \sum_{i=1}^\infty\left(e^{-\frac{Q(i-1)}{2}}\left(1-e^{-\frac{Q}{2}}\right) \tau_a i\right)
\end{array}
\]
For the narrow well this is,
\[
\begin{array}{l}
\mathbb{E}[\tau^{n}_e] \ge \sum_{i=1}^\infty P(\tau^n_e >(i-1)\tau_a)^Q (1-P(\tau^n_e>\tau_a i|\tau^n_e >(i-1)\tau_a)^Q) (i-1)\tau_a
\\ \qquad \qquad\ge \sum_{i=1}^\infty\left(e^{-2Q(i-1)}\left(1-e^{-2Q}\right) \tau_a (i-1)\right)
\end{array}
\]
The difference in the expected escape times satisfies,
\[
\begin{array}{l}
\mathbb{E}[\tau^{w}_e-\tau^{n}_e] \le \sum_{i=1}^\infty\left[\left[\left(e^{-\frac{Q(i-1)}{2}}\left(1-e^{-\frac{Q}{2}}\right)\right)-\left(e^{-2Q(i-1)}\left(1-e^{-2Q}\right)\right)\right](i-1)\right.\\ \qquad\qquad\qquad\left. +\left(e^{-\frac{Q(i-1)}{2}}\left(1-e^{-\frac{Q}{2}}\right) \right)\right]\tau_a
\end{array}
\]

Recall that in the case of averaging, if escaping takes place between $(i-1)\tau_a$ and $i\tau_a$
there were no escapes with less that $\tau_a$ for $M$ processors multiplied by $i-1$ times trials,
and at least one escape between $(i-1)\tau_a$ and $i\tau_a$, i.e., not all did not escape between these two times.

The expected first escape time for any trajectory among $Q$ from a wide valley, thus, is,
\[
\begin{array}{l}
\mathbb{E}[\tau^{a,w}_e]\le \sum_{i=1}^\infty \left(\mathbb{P}[\tau^{w}_e>\tau_a]^{(i-1)Q}(1-\mathbb{P}[\tau^{w}_e>\tau_a]^{Mi}) \tau_a i\right)
\\ \qquad \le \sum_{i=1}^\infty e^{-\frac{(i-1)Q}{2}}(1-e^{-\frac{iQ}{2}})\tau_a i
\end{array}
\]

And now with averaging, the escape time from a narrow valley satisfies
\[
\begin{array}{l}
\mathbb{E}[\tau^{a,n}_e]\ge \sum_{i=1}^\infty \left(\mathbb{P}[\tau^{n}_e>\tau_a]^{(i-1)Q}(1-\mathbb{P}[\tau^{n}_e>\tau_a]^{Qi}) \tau_a (i-1)\right) \\ \qquad
\ge \sum_{i=1}^\infty e^{-2(i-1)Q}(1-e^{-2iQ})\tau_a (i-1).
\end{array}
\]

Thus the difference in the expected escape times satisfies
\[
\begin{array}{l}
\mathbb{E}[\tau^{a,w}_e-\tau^{a,n}_e] \le \sum_{i=1}^\infty\left[\left[e^{-\frac{(i-1)Q}{2}}(1-e^{-\frac{iQ}{2}})-e^{-2(i-1)Q}(1-e^{-2iQ})\right](i-1)+
e^{-\frac{(i-1)Q}{2}}(1-e^{-\frac{iQ}{2}})\right]\tau_a.
\end{array}
\]

It is clear from the above expressions that the upper bound for the difference is larger in the case of averaging. This implies that
averaging results in a greater difference between the escape times between wider and narrow local minimizers, suggesting that, on average
if one were to stop a process of training and use the resulting iterate as the estimate for the parameters, this iterate
would more likely come from a flatter local minimizer if it was generated with a periodic averaging procedure, relative to standard SGD. 
Thus it should be expected, at least by this argument that better generalization is more likely with periodic averaging.

Note that since they are both upper bounds, this isn't a 
formal proof that in all cases the escape times are more favorable
for generalization in the case of averaging, but a guide
as to the mathematical intuition as to how this could be the case.

\section{More on Related Work}\label{append:morerelated}
The main contribution of the present work is the algorithm \palsgd, where the shared models on each worker receive partial asynchronous gradient updates locally in addition to being averaged across the workers in a non-blocking fashion. On the other hand, in \alsgd each local gradient update is on the full model vector. In the existing literature, a variant of celebrated \hw \citep{recht2011hogwild}, called \hwp \citep{zhang2016hogwild} was proposed, which harnesses the non-uniform-memory-access (NUMA) architecture based multi-core computers. In this method, threads \textit{pinned} to individual CPUs on a multi-socket mainboard with access to a common main memory, form clusters. 
In principle, the proposed \alsgd may resemble \hwp. However, there are important differences: 
\begin{enumerate}
	\item At the structural level, the averaging in \hwp is binary on a ring graph of thread-clusters. Furthermore, it is a token based procedure where in each round only two neighbours synchronize, whereas in \alsgd it is all-to-all.
	\item In \hwp each cluster maintains two copies of the model: a locally updating copy and a buffer copy to store the last synchronized view of the model, whereby each cluster essentially passes the ``update" in the local model since the last synchronization to its neighbour. However, this approach has a drawback as identified by the authors: the update that is passed on a ring of workers eventually \enquote{comes back} to itself thereby leading to divergence, to overcome this problem they decay the sent out update. By contrast, \alsgd uses no buffer and does not track updates as such. Averaging the model with each peer, similar to \lsgd, helps each of the peers to adjust their optimization dynamics.
	\item It is not known if the token based model averaging of \hwp is sufficient for training deep networks where generalization is the core point of concern. As against that, we observed that our asynchronous averaging provides an effective protocol of synchronization and often results in improving the generalization.
	\item Comparing the thread-clusters of \hwp to concurrent processes on GPUs in \alsgd, the latter uses a dedicated process that performs averaging without disturbing the local gradient updates thereby maximally reducing the communication overhead.
	\item Finally, we prove convergence of \alsgd for non-convex problems, by contrast, \hwp does not have any convergence guarantee.  
\end{enumerate}

Thus, in its own merit, \alsgd is a novel distributed \sgd method. However, to our knowledge, this is the first work to suggest averaging the models on distributed workers, where locally they are asynchronously updated with partial stochastic gradients, which is the proposed method \palsgd.  

\section{Experiments and Hyperparameters}\label{append:exp}
\paragraph{System Details.} More detail on the specific settings: (a) \textbf{S1:} two Nvidia GeForce RTX 2080 Ti GPUs and an Intel(R) Xeon(R) E5-1650 v4 CPU running @ 3.60 GHz with 12 logical cores, (b) \textbf{S2:} four Nvidia GeForce RTX 2080 Ti GPUs and two Intel(R) Xeon(R) E5-2640 v4 CPUs running @ 2.40 GHz totaling 40 logical cores, (c) \textbf{S3:} two \textbf{S2} settings connected with a 100 GB/s infiniband link, and (d) \textbf{S4:} eight Nvidia V100 GPUs, 2 each on 4 nodes in a supercomputing cluster; each node contains two Intel(R) Xeon(R) CPUs running @ 2.70 GHz totaling 72 logical cores; the nodes on the cluster are managed by slurm and connected with 100 GB/s infiniband links.

\paragraph{Nvidia Multi-process Service.} It is pertinent to mention that the key enabler of the presented fully non-blocking asynchronous implementation methodology is the support for multiple independent client connection between a CPU and a GPU. Starting from early 2018 with release of Volta architecture, Nvidia's technology Multi-process Service (MPS) efficiently support this. For more technical specifications please refer to their doc-pages \cite{mpsnvidia}. Now, we present extra experimental results. 

\paragraph{Hyperparameters' Details.} The hyperparameters of the experiments are provided in Table \ref{tab:hyper}. In each of the experiments, we have a constant momentum of 0.9. In each of them, as mentioned before, we use standard multi-step diminishing learning rate scheduler with dampening factor $\gamma$ at designated epochs for the methods \mbsgd and \plsgd, whereas, we use cosine annealing for the methods \alsgd and \palsgd without restart. Cosine annealing starts immediately after warm-up, which in each case happens for the first 5 epochs. For \palsgd, we fix initially $E_{st}$ number of epochs when we perform full asynchronous SGD updates i.e. \alsgd. 

\textbf{More \cft/\cfh Results.} Performance of the methods for \res-20/\cfh for 300 epochs and \sqn/\cfh for 200 epochs are available in Tables \ref{tab:rntcfh} and \ref{tab:sqnetcfh}, respectively. In \alsgd, \palsgd, and \plsgd we use $K=1$ up to 150 and 100 epochs in the two respective experiments and after that $K=16$.  In summary, as seen before, the relative latency of the methods are as seen in Table \ref{tab:scale}, whereas in each case \alsgd and \palsgd recovers or improves the baseline training results.
\begin{table*}[h]
	\centering\small
	\begin{tabularx}{\textwidth}{|X|X|X|X|X|X|X|X|}	
	\toprule
	\textbf{Method} & $B_{loc}$ & $U$ & \textbf{Tr.L.} & \textbf{Te.L.} & \textbf{Tr.A.} & \textbf{Te.A.} & \textbf{$T$} \\
	\midrule
	\alsgd &      128 &           6 &     0.295 &    1.191 &      {92.23} &     {69.43} &     {1285} \\     
	\alsgd &      128 &           4 &     0.299 &    1.198 &      {92.15} &     {69.78} &     {1325} \\     
	\palsgd &      128 &         6 &     0.404 &    1.173 &      88.69 &     \textbf{69.47} &     \textbf{1085} \\    
	\palsgd &      128 &           4 &     0.378 &    1.161 &      89.50 &     \textbf{69.75} &     \textbf{1154} \\     
	\mbsgd &     1024 &              - &     0.464 &    1.212 &      86.46 &     67.04 &     1244 \\
	\mbsgd &      128 &             - &     0.360 &    1.111 &      89.93 &     69.73 &     1754 \\
	\plsgd &     1024 &                 - &     0.373 &    1.152 &      89.48 &     67.97 &     1198 \\
	\plsgd &      128 &           - &     0.379 &    1.099 &      89.29 &     69.67 &     1613 \\
	\bottomrule
\end{tabularx} \caption{\small \rnt on \cfh over the setting \textbf{S1}.}

	\label{tab:rntcfh}\end{table*}

\begin{table*}[h]
	\centering\small
	\begin{tabularx}{\textwidth}{|X|X|X|X|X|X|X|X|}		
	\toprule
	\textbf{Method} & $B_{loc}$ & $U$ & \textbf{Tr.L.} & \textbf{Te.L.} & \textbf{Tr.A.} & \textbf{Te.A.} & \textbf{$T$}\\
	\midrule
 \alsgd &      128 &                          4 &     0.037 &    1.549 &      {99.56} &     {70.17} &     {1263} \\
\palsgd &      128 &                  4 &     0.070 &    1.504 &      98.79 &     \textbf{69.97} &     \textbf{1071} \\ 
 \mbsgd &     1024 &                - &     0.043 &    1.294 &      99.60 &     68.30 &     1584 \\
 \mbsgd &      128 &                   - &     0.049 &    1.178 &      99.29 &     70.06 &     2134\\
 \plsgd &     1024 &                    - &     0.061 &    1.259 &      99.25 &     68.46 &     1209 \\
  \plsgd &      128 &                  - &     0.062 &    1.207 &      99.00 &     69.38 &     1857\\  
\bottomrule
\end{tabularx} \caption{\sqn on \cfh over the setting \textbf{S1}.}

	\label{tab:sqnetcfh}\end{table*}
\vspace{0.3cm}
\begin{figure*}[h]
	\setlength\tabcolsep{3pt} 
	\centering
	\small
	\begin{tabularx}{\textwidth}{|p{1.2cm}|p{2.8cm}|p{1.5cm}|p{0.7cm}|p{0.7cm}|p{0.8cm}|X|X|X|p{0.5cm}|p{2cm}|X|p{0.5cm}|}
		\hline
		Exp. & Model & Dataset & Epochs	& $\underline{B_{loc}}$	& $\underline{\alpha}$& $\alpha^w_{DP}$& $\alpha^w_{LAP}$& $\alpha^w_{LPP}$ & $\gamma$  & Dampen -ing at & Wt. decay &  $T_{st}$ \\
		\hline
		Table \ref{tab:sgd}& \res-20 & \cft	& 300 & 128 &  0.1  & 0.2  & 0.2  & 0.25  & 0.1 & 150, 225 & 0.0005 & 50 \\
		\hline
		Table \ref{tab:scale}& \wres-16x8 & \cft	& 200 & 128  &  0.1  & --  & --  & --   & 0.2 & 60, 120, 160 & 0.0001 & 20 \\
		\hline
		Table \ref{tab:Arch_wnet168_Data_cifar100}& \wres-16x8 & \cfh & 200 & 128  &  0.1  & 0.8  & 0.8  & 1.0   & 0.2 & 60, 120, 160 & 0.0001 & 20 \\
		\hline
		Table \ref{tab:Arch_squuezenet_Data_cifar10}& \sqn & \cft & 200 & 128  &  0.1  & 0.2  & 0.2  & 0.25   & 0.2 & 60, 120, 160 & 0.0001 & 20 \\
		\hline
		Table \ref{tab:rn50}& \res-50 & \imn	& 90  & 32 &  0.0125  & 0.8  & 0.1  & 0.125  & 0.1 & 30, 60, 80 & 0.00005 & 9 \\
		\hline
		Table \ref{tab:rntcfh}& \res-20 & \cfh	& 300  & 128 &  0.1  & 0.2  & 0.2  & 0.25   & 0.1 & 150, 225 & 0.0005 & 50 \\
		\hline
		Table \ref{tab:sqnetcfh}& \sqn & \cfh & 200  & 128 &  0.1  & 0.2  & 0.2  & 0.25   & 0.2 & 60, 120, 160 & 0.0001 & 20 \\
		\hline		
	\end{tabularx}\captionof{table}{Hyperparameters used in the experiments.  $\underline{\alpha}$: Initial Learning Rate (LR), $\alpha^w_{DP}$: Warmed-up LR for Data-parallel methods \mbsgd and \plsgd \textit{with standard batch size $\underline{B_{loc}}$ on each GPU} (for large local batch sizes we scale the LR in the ratio of used batch size and $\underline{B_{loc}}$), $\alpha^w_{LAP}$: Warmed-up LR for \alsgd, $\alpha^w_{LPP}$: Warmed-up LR for \palsgd. In general, we follow the standard LR warm-up strategy as discussed before: warm up for the first 5 epochs to scale from $\underline{\alpha}$ to $\underline{\alpha}\times\frac{B_{loc}{\times}Q}{\underline{B_{loc}}}$, where $Q$ is the number of workers. ``Dampening at'' are the epoch counts where we dampen $\alpha^w_{DP}$ by the multiple $\gamma$ following the standard multi-step diminishing LR tuning scheme.}
\label{tab:hyper}\end{figure*}

	
\end{document}